\newcommand{\sigmoid}{\mbox{sigmoid}}
\newcommand{\svs}[1]{}
\newcommand{\vs}[1]{}
\newcommand{\otherf}{\phi}
\newcommand{\calP}{P}
\newenvironment{customthm}[1]
  {\innercustomthm}
  {\endinnercustomthm}
\newif\ifLongversion
\newcommand\blfootnote[1]{%
  \begingroup
  \renewcommand\thefootnote{}\footnote{#1}%
  \addtocounter{footnote}{-1}%
  \endgroup
}
\begin{document}

\firstpageno{1}

\title{GSNs: Generative Stochastic Networks}


\author{
\noindent  Guillaume Alain$^{*+}$, Yoshua Bengio$^{*+}$, Li Yao$^*$, 
 Jason Yosinski$^\dagger$, \'Eric Thibodeau-Laufer$^*$, Saizheng Zhang$^*$ and Pascal Vincent$^*$\\
\\
$^*$ Department of Computer Science and Operations Research\\
University of Montreal\\
Montreal, H3C 3J7, Quebec, Canada\\
\\
$^\dagger$ Department of Computer Science, Cornell University\\
}

\editor{  } 

\maketitle

\begin{abstract}
We introduce a novel training principle for generative probabilistic models that is an
alternative to maximum likelihood. The proposed Generative Stochastic
Networks (GSN) framework generalizes Denoising Auto-Encoders (DAE)
and is based on learning the transition operator of a
Markov chain whose stationary distribution estimates the data distribution.
The transition distribution is a conditional distribution that generally
involves a small move, so it has fewer dominant modes and is unimodal in the
limit of small moves. This simplifies the learning problem, making it less like
density estimation and more akin to supervised function approximation, with gradients that can be
obtained by backprop.
The theorems provided here provide a probabilistic interpretation for denoising autoencoders
and generalize them; seen in the context of this framework, auto-encoders that learn with injected noise
are a special case of GSNs and can be interpreted as generative models.
The theorems also provide an
interesting justification for dependency networks and generalized
pseudolikelihood and define an appropriate joint distribution and
sampling mechanism, even when the conditionals are not consistent. GSNs
can be used with missing inputs and can be used to sample subsets of
variables given the rest. Experiments validating these theoretical results
are conducted on both synthetic datasets and image datasets. The experiments employ a particular
architecture that mimics the Deep Boltzmann Machine Gibbs sampler but
that allows training to proceed with backprop through a recurrent neural
network with noise injected inside and without the need for layerwise
pretraining.
\end{abstract}

\blfootnote{$^+$The first two authors had an equally important contribution}

\svs{3}
\section{Introduction}
\svs{2}

\begin{figure}[htpb]
\vs{3}
\centering
\includegraphics[width=0.8\linewidth]{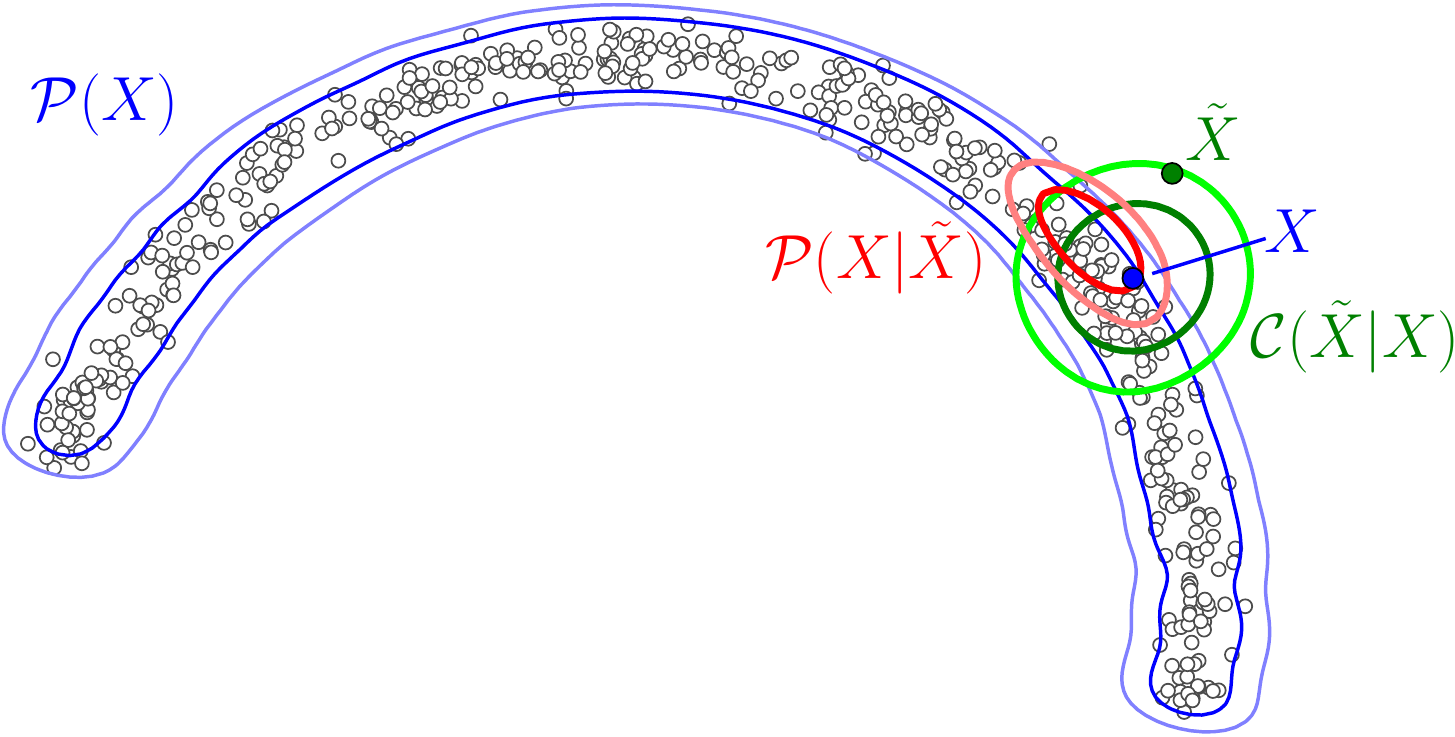}
\includegraphics[width=0.8\linewidth]{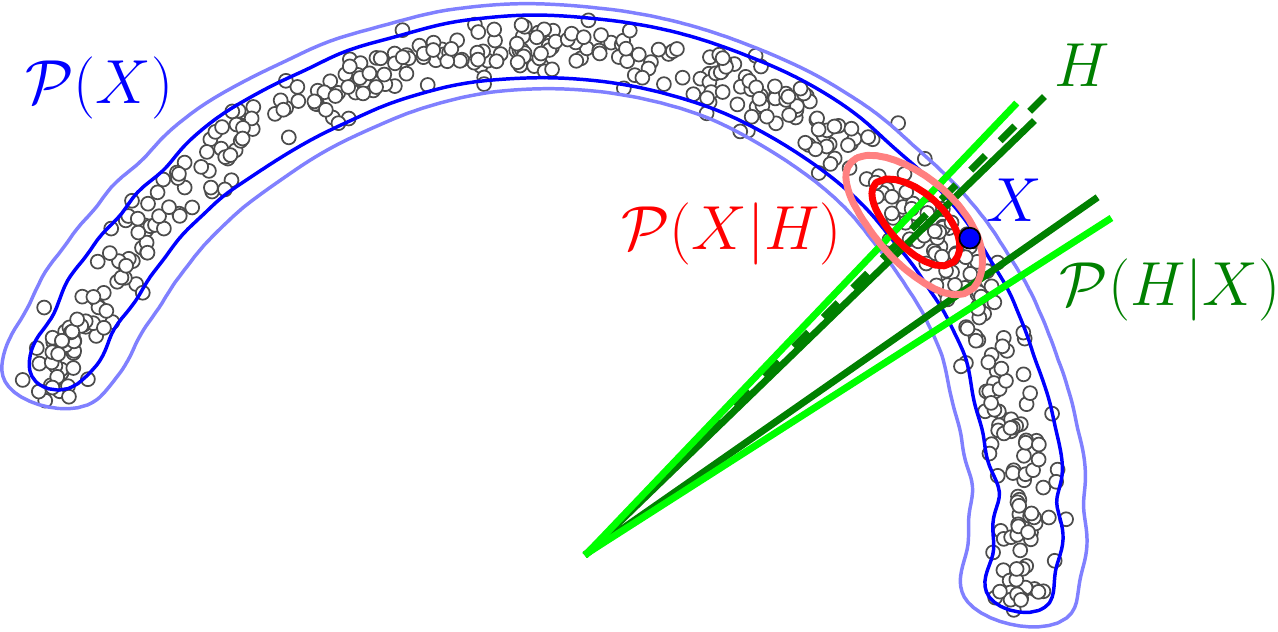}
\vs{3}
\caption{
{\em Top:} A denoising auto-encoder defines an estimated Markov chain where the transition 
operator first samples a corrupted $\tilde{X}$ from ${\cal C}(\tilde{X}|X)$ and then samples a reconstruction from $P_\theta(X|\tilde{X})$, which is trained to estimate the ground truth 
${\calP}(X|\tilde{X})$. Note how for any given $\tilde{X}$,
${\calP}(X|\tilde{X})$ is a much simpler (roughly unimodal) distribution than the
ground truth ${\calP}(X)$ and its partition function is thus easier to approximate.
{\em Bottom:} More generally, a GSN allows the use of arbitrary 
latent variables $H$ in addition to $X$, with the Markov chain state 
(and mixing) involving both $X$ and $H$. Here $H$ is the angle about the origin.
The GSN inherits the benefit of a simpler conditional and adds 
latent variables, which allow more powerful deep representations in which mixing is easier~\citep{Bengio-et-al-ICML2013}.
}
\label{fig:data_px}
\vs{3}
\end{figure}

Research in deep learning~(see \citet{Bengio-2009-book}
and~\citet{Bengio-Courville-Vincent-TPAMI2013} for reviews) grew from
breakthroughs in unsupervised learning of representations, based
mostly on the Restricted Boltzmann Machine (RBM)~\citep{Hinton06},
auto-encoder
variants~\citep{Bengio-nips-2006-small,VincentPLarochelleH2008-small}, and
sparse coding variants~\citep{HonglakLee-2007,ranzato-07-small}.
However, the most impressive recent results have been obtained with purely
supervised learning techniques for deep networks, in particular for speech
recognition~{\citep{dahl2010phonerec-small,Deng-2010,Seide2011}
and object recognition~\citep{Krizhevsky-2012-small}.
The latest breakthrough in object
recognition~\citep{Krizhevsky-2012-small} was achieved with fairly deep
convolutional networks with a form of noise injection in the input and hidden
layers during training,
called dropout~\citep{Hinton-et-al-arxiv2012}.

\ifLongversion
In all of these cases, the availability of large quantities of labeled data was critical.

On the other hand, progress with deep unsupervised architectures has been
slower, with the established approaches with a probabilistic footing
being the Deep Belief Network (DBN)~\citep{Hinton06} and the Deep Boltzmann
Machine (DBM)~\citep{Salakhutdinov+Hinton-2009-small}. 
Although single-layer unsupervised learners are fairly well
developed and used to pre-train these deep models, 
jointly training all the layers with respect to a single
unsupervised criterion remains a challenge, with a few
techniques arising to reduce that difficulty~\citep{Montavon2012,Goodfellow-et-al-NIPS2013}.
In contrast to recent progress toward joint supervised training of models with many layers, joint 
unsupervised training of deep models remains a difficult task.
\fi

\ifLongversion
In particular, the normalization constant
involved in complex multimodal probabilistic models is often intractable
and this is dealt with using various approximations (discussed below)
whose limitations may be an important part of the difficulty for training
and using deep unsupervised, semi-supervised or structured output models.
\fi

Though the goal of training large unsupervised networks has turned out to
be more elusive than its supervised counterpart, the vastly larger
available volume of unlabeled data still beckons for efficient methods to
model it.  Recent progress in training supervised models raises the question:
can we take advantage of this progress to improve our ability to train
deep, generative, unsupervised, semi-supervised or structured output models?

This paper lays theoretical foundations
for a move in this direction through the following main contributions:

{\bf  1 -- Intuition: } In Section~\ref{sec:unsup_hard} we discuss what we view as
basic motivation for studying alternate ways of training unsupervised probabilistic models, i.e., avoiding
the intractable sums or maximization involved in many approaches.

{\bf   2 -- Training Framework: } We start Section~\ref{sec:gsn} by presenting our recent work on the generative view of denoising auto-encoders (Section~\ref{sec:DAE-model-prob-density}). We present the \emph{walkback} algorithm which addresses some of the training difficulties with denoising auto-encoders (Section~\ref{sec:walkback-description}). 

We then generalize those results by introducing
latent variables in the framework to define Generative Stochastic Networks
(GSNs) (Section~\ref{sec:from-DAE-to-GSN}). GSNs aim to estimate the data-generating
distribution indirectly, by parametrizing the transition operator of a
Markov chain rather than directly parametrizing a model $P(X)$ of the observed
random variable $X$.
Most critically, {\em this framework transforms the unsupervised density estimation
problem into one which is more similar to supervised function approximation}.
This enables training by (possibly regularized) maximum likelihood
and gradient descent computed via simple back-propagation,
avoiding the need to compute intractable partition functions. Depending
on the model, this may allow us to draw from any number of recently demonstrated
supervised training tricks. 
\ifLongversion
For example, one could use a convolutional
architecture with max-pooling for parametric parsimony and computational
efficiency, 
or dropout ~\citep{Hinton-et-al-arxiv2012} to prevent co-adaptation of hidden
representations.
\fi

{\bf   3 -- General theory:}
Training the generative (decoding / denoising) component of a GSN 
$P(X|h)$ with noisy representation $h$ is often far easier than modeling $P(X)$ explicitly
(compare the blue and red distributions in \mbox{Figure~\ref{fig:data_px})}. We prove that if our estimated $P(X|h)$
is consistent (e.g. through maximum likelihood), then the
stationary distribution of the resulting Markov chain is a consistent estimator of the
data-generating density, ${\calP}(X)$ (Section~\ref{sec:DAE-model-prob-density} and
Appendix \ref{sec:local-consistency}).

{\bf   4 -- Consequences of theory: } We show that the model is general and extends to a wide range of architectures,
including sampling procedures whose computation can be unrolled
as a Markov Chain, i.e., architectures that add noise during intermediate
computation in order to produce random samples of a desired distribution (Theorem~\ref{thm:noisy-reconstruction}).
An exciting frontier in machine learning is the problem
of modeling so-called structured outputs, i.e., modeling
a conditional distribution where the output is high-dimensional
and has a complex multimodal joint distribution (given the input variable).
We show how GSNs can be used to support such structured output and missing values (Section~\ref{sec:missing_inputs}).

{\bf   5 -- Example application: } In Section~\ref{sec:gsn_experiment} we show an example application of the GSN theory to create a
deep GSN whose computational graph 
resembles the one followed by Gibbs sampling in deep Boltzmann machines (with continuous latent variables),
but that can be trained efficiently with back-propagated gradients
and without layerwise pretraining. 
Because the Markov Chain is defined over a state
$(X,h)$ that includes latent variables, we reap the dual advantage
of more powerful models for a given number of
parameters and better mixing in the chain as we add noise to variables
representing higher-level information, first suggested by the results
obtained by~\citet{Bengio-et-al-ICML2013}
and~\citet{Luo+al-AISTATS2013-small}. The experimental results show that
such a model with latent states indeed mixes better
than shallower models without them (Table~\ref{tab:LL}).

{\bf   6 -- Dependency networks: } Finally, an unexpected result falls out of the GSN theory:
it allows us to provide a novel justification for dependency networks~\citep{HeckermanD2000}
and for the first time define a proper joint distribution between all the visible
variables that is learned by such models (Section~\ref{sec:dependency-nets}).


\svs{2}
\section{Summing over too many major modes}
\label{sec:unsup_hard}
\svs{2}

The approach presented in this paper is motivated by a difficulty often encountered with 
probabilistic models, especially those containing anonymous latent variables.
They are called anonymous
because no a priori semantics are assigned to them, like in Boltzmann machines,
and unlike in many knowledge-based graphical models. Whereas inference
over non-anonymous latent variables is required to make sense of the model,
anonymous variables are only a device to capture the structure of the distribution
and need not have a clear human-readable meaning.

However, graphical models with latent variables often require dealing
with either or both of the following fundamentally difficult problems in
the inner loop of training, or to actually use the model for making
decisions: inference (estimating the posterior distribution over latent
variables $h$ given inputs $x$) and sampling (from the joint model of $h$
and $x$). However, if the posterior $P(h|x)$ has a huge number of modes
that matter, then the approximations made may break down.

Many of the computations involved in graphical models (inference, sampling,
and learning) are made intractable and difficult to approximate because
of the large number of non-negligible modes in the modeled distribution
(either directly $P(x)$ or a joint distribution $P(x,h)$ involving latent variables $h$).
In all of these cases, what is intractable is the computation or approximation
of a sum (often weighted by probabilities), such as a marginalization or the estimation
of the gradient of the normalization constant. If only a few terms in this
sum dominate (corresponding to the dominant modes of the distribution), then
many good approximate methods can be found, such as Monte-Carlo Markov
chains (MCMC) methods.

Deep Boltzmann machines~\citep{Salakhutdinov+Hinton-2009-small} combine the difficulty of
inference (for the {\em positive phase} where one tries to push the
energies associated with the observed $x$ down) and also that of
sampling (for the {\em negative phase} where one tries to push up the
energies associated with $x$'s sampled from $P(x)$).  
Sampling for the
negative phase is usually done by MCMC, although some unsupervised learning
algorithms~\citep{CollobertR2008-small,Gutmann+Hyvarinen-2010-small,Bordes-et-al-LSML2013}
involve ``negative examples'' that are sampled through simpler procedures
(like perturbations of the observed input, in a spirit reminiscent of
the approach presented here). 
Unfortunately, using an MCMC method to sample from $P(x,h)$ in order to
estimate the gradient of the partition function may be seriously hurt by
the presence of a large number of important modes, as argued below.

To evade the problem of highly multimodal joint or posterior
distributions, the currently known
approaches to dealing with the above intractable sums make very strong explicit 
assumptions (in the parametrization) or
implicit assumptions (by the choice of approximation methods) on the form of the distribution of interest.
In particular, MCMC methods are more likely to produce a good estimator
if the number of non-negligible modes is small: otherwise the
chains would require at least as many MCMC steps as the number of such
important modes, times a factor that accounts for the mixing time
between modes. Mixing time itself can be  very problematic as a trained model
becomes sharper, as it approaches a data-generating distribution
that may have well-separated and sharp modes (i.e., manifolds)~\citep{Bengio-et-al-ICML2013}.

We propose to make another assumption that might suffice to bypass
this multimodality problem: the
effectiveness of function approximation. 
As is typical in machine learning,
we postulate a rather large and flexible family of functions (such as deep
neural nets) and then use all manner of tricks to pick a member from that
combinatorially large family (i.e. to train the neural net) that both fits
observed data and generalizes to unseen data well.

In particular, the GSN approach presented in the next section relies on estimating the transition operator
of a Markov chain, e.g. $P(x_t | x_{t-1})$ or $P(x_t, h_t | x_{t-1}, h_{t-1})$.
Because each step of the Markov chain is generally local, these transition
distributions will often include only a very small number of important modes
(those in the neighborhood of the previous state). Hence the gradient of their
partition function will be easy to approximate. For example consider the
denoising transitions studied by~\citet{Bengio-et-al-NIPS2013} and illustrated
in Figure~\ref{fig:data_px},
where $\tilde{x}_{t-1}$ is a stochastically corrupted version of $x_{t-1}$
and we learn the denoising distribution $P(x | \tilde{x})$.
In the extreme case (studied empirically here) where $P(x | \tilde{x})$
is approximated by a unimodal distribution, the only form of training
that is required involves function approximation (predicting the clean $x$
from the corrupted $\tilde{x}$).

Although having the true $P(x | \tilde{x})$ turn out to be unimodal
makes it easier to find an appropriate family of models for it,
unimodality is by no means required by the GSN framework itself.  One
may construct a GSN using any multimodal model for output (e.g. mixture
of Gaussians, RBMs, NADE, etc.), provided that gradients for the parameters of
the model in question can be estimated (e.g. log-likelihood gradients).



The approach proposed here thus avoids the need for a poor approximation
of the gradient of the partition function in the inner loop of training,
but still has the potential of capturing very rich distributions by relying
mostly  on ``function approximation''.

Besides the approach discussed here, there may well be other
very different ways of evading this problem of 
intractable marginalization, including
approaches such as sum-product
networks~\citep{Poon+Domingos-2011b}, which are based on learning a probability function that
has a tractable form by construction and yet is from a flexible enough family
of distributions. Another interesting direction of investigation
that avoids the need for MCMC and intractable partition functions
is the variational auto-encoder~\citep{Kingma+Welling-ICLR2014,Gregor-et-al-ICML2014,Mnih+Gregor-ICML2014,Rezende-et-al-arxiv2014} and related directed models~\citep{Bornschein+Bengio-arxiv2014-small,Ozair+Bengio-arxiv2014},
which rely on learned approximate inference.

\svs{2}
\section{Generative Stochastic Networks}
\label{sec:gsn}
\svs{2}

In this section we work our way from denoising auto-encoders (DAE) to generative stochastic networks (GSN).
We illustrate the usefulness of denoising auto-encoders being applied iteratively as a way
to generate samples (and model a distribution). We introduce the {\em walkback} training algorithm
and show how it can facilitate the training.

We generalize the theory to GSNs, and provide a theorem that serves as a recipe 
as to how they can be trained. We also reference a classic result from matrix perturbation theory
to analyze the behavior of GSNs in terms of their stationary distribution.

We then study how GSNs may be used to fill missing values and theoretical
conditions for estimating associated conditional samples. 
Finally, we connect GSNs to dependency nets
and show how the GSN framework fixes one of the main problems with the theoretical
analysis of dependency nets and propose a particular way of sampling from them.

\svs{2}
\subsection{Denoising auto-encoders to model probability distributions}
\label{sec:DAE-model-prob-density}
\svs{2}

Assume the problem we face is to construct a model for some unknown
data-generating distribution ${\calP}(X)$ given only examples of $X$ drawn
from that distribution. In many cases, the unknown distribution
${\calP}(X)$ is complicated, and modeling it directly can be difficult.

A recently proposed approach using denoising auto-encoders (DAE) transforms the 
difficult task of modeling ${\calP}(X)$ into a supervised learning problem that may be much easier to solve.
The basic approach is as follows: given a clean example data point $X$ from ${\calP}(X)$,
we obtain a corrupted version $\tilde{X}$ by sampling from some corruption distribution ${\cal C}(\tilde{X}|X)$.
For example, we might take a clean image, $X$, and add random white noise to produce $\tilde{X}$.
We then use supervised learning methods to train a function to reconstruct, as accurately as possible,
any $X$ from the data set given only a noisy version $\tilde{X}$. As shown in Figure~\ref{fig:data_px},
the reconstruction distribution ${\calP}(X|\tilde{X})$ may often be much easier to learn than the data distribution ${\calP}(X)$,
{\em because ${\calP}(X|\tilde{X})$ tends to be dominated by a single or few major modes}
(such as the roughly Gaussian shaped density in the figure). What we call a major
mode is one that is surrounded by a substantial amount of probability mass. There
may be a large number of minor modes that can be safely ignored in the context
of approximating a distribution, but the major modes should not be missed.

But how does learning the reconstruction distribution help us solve our
original problem of modeling ${\calP}(X)$? The two problems are clearly
related, because if we knew everything about ${\calP}(X)$, then our
knowledge of the ${\cal C}(\tilde{X}|X)$ that we chose would allow us to
precisely specify the optimal reconstruction function via Bayes rule:
${\calP}(X|\tilde{X}) = \frac{1}{z} {\cal C}(\tilde{X}|X){\calP}(X)$,
where $z$ is a normalizing constant that does not depend on $X$. As one
might hope, the relation is also true in the opposite direction: once we
pick a method of adding noise, ${\cal C}(\tilde{X}|X)$, knowledge of the
corresponding reconstruction distribution ${\calP}(X|\tilde{X})$ is
sufficient to recover the density of the data ${\calP}(X)$.

In a recent paper, \citet{Alain+Bengio-ICLR2013} showed that denoising auto-encoders
with small Gaussian corruption and squared error loss estimated the
score (derivative of the log-density with respect to the input) of 
continuous observed random variables, thus implicitly estimating $P(X)$.
The following Proposition \ref{prop:markov-chain-basic-gsn}
generalizes this to arbitrary variables (discrete, continuous or both),
arbitrary corruption (not necessarily asymptotically small), and arbitrary
loss function (so long as they can be seen as a log-likelihood).

\begin{proposition}
\label{prop:markov-chain-basic-gsn}
Let $P(X)$ be the training distribution for which we only have empirical samples.
Let ${\cal C}(\tilde{X}|X)$ be the fixed corruption distribution and $P_\theta(X|\tilde{X})$
be the trained reconstruction distribution (assumed to have sufficient capacity).
We define a Markov chain that starts at some $X_0 \sim P(X)$ and then iteratively samples pairs of values
$(X_k, \tilde{X}_k)$ by alternatively sampling from 
${\cal C}(\tilde{X}_k|X_k)$ and from $P_\theta(X_{k+1}|\tilde{X}_k)$.

\begin{center}
\includegraphics[width=0.85\linewidth]{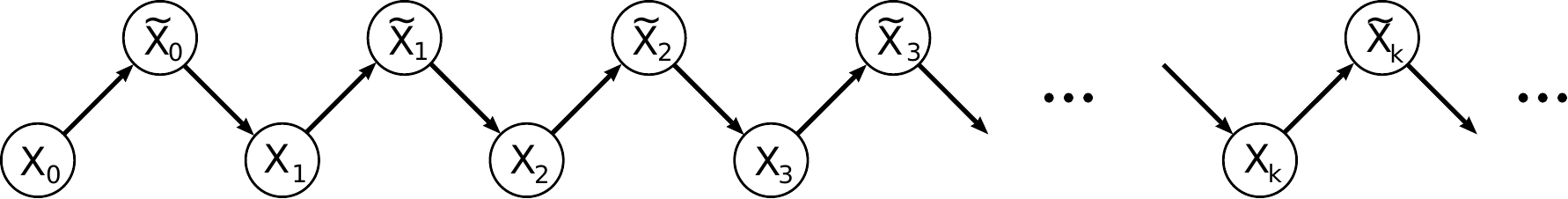}
\end{center}

Let $\pi$ be the stationary distribution of this Markov chain when we consider only the
sequence of values of $\{X_k\}_{k=0}^\infty$.

If we assume that this Markov chain is irreducible, that its stationary distribution exists,
and if we assume that $P_\theta(X|\tilde{X})$ is the distribution
that minimizes optimally the following expected loss
\begin{equation*}
\mathcal{L} = \int_{\tilde{X}} \int_X P(X)\mathcal{C}(\tilde{X}|X) \log P_\theta(X | \tilde{X}) dX d\tilde{X},
\end{equation*}
then we have that the stationary distribution $\pi$ is the same as the training distribution $P(X)$.
\end{proposition}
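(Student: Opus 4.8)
The plan is to proceed in three steps: (i) identify the reconstruction distribution $P_\theta$ singled out by the optimization of $\mathcal{L}$, (ii) show that $P(X)$ is invariant under the transition operator that the chain induces on the sequence $\{X_k\}$, and (iii) conclude by uniqueness of the stationary distribution.

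For step (i), I would regroup $\mathcal{L}$ around $\tilde{X}$. Writing $P(\tilde{X}) = \int_X P(X)\,\mathcal{C}(\tilde{X}|X)\,dX$ for the marginal over corrupted points and $P(X|\tilde{X}) = P(X)\,\mathcal{C}(\tilde{X}|X)/P(\tilde{X})$ for the Bayes posterior (the ``ground truth'' reconstruction of Figure~\ref{fig:data_px}), one has
\begin{equation*}
\mathcal{L} = \int_{\tilde{X}} P(\tilde{X}) \left( \int_X P(X|\tilde{X}) \log P_\theta(X|\tilde{X})\, dX \right) d\tilde{X}.
\end{equation*}
For each fixed $\tilde{X}$ the inner integral equals a quantity that does not depend on $P_\theta$ minus $\mathrm{KL}\!\left(P(\cdot|\tilde{X}) \,\|\, P_\theta(\cdot|\tilde{X})\right)$; since the KL divergence is non-negative and vanishes only when its two arguments agree, the maximum of $\mathcal{L}$ over choices of $P_\theta(\cdot|\tilde{X})$ is attained exactly at $P_\theta(\cdot|\tilde{X}) = P(\cdot|\tilde{X})$. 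The sufficient-capacity assumption is precisely what guarantees this optimum is realizable, so we may take $P_\theta(X|\tilde{X}) = P(X|\tilde{X})$ ($P(\tilde{X})$-almost everywhere).

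For step (ii), the transition operator of the chain restricted to the $X$-marginal is $T(X'|X) = \int_{\tilde{X}} \mathcal{C}(\tilde{X}|X)\, P_\theta(X'|\tilde{X})\, d\tilde{X}$. Substituting the result of step (i) and exchanging the order of integration (Fubini),
\begin{align*}
\int_X P(X)\, T(X'|X)\, dX
&= \int_{\tilde{X}} P(X'|\tilde{X}) \left( \int_X P(X)\, \mathcal{C}(\tilde{X}|X)\, dX \right) d\tilde{X} \\
&= \int_{\tilde{X}} P(X'|\tilde{X})\, P(\tilde{X})\, d\tilde{X}
= \int_{\tilde{X}} P(X',\tilde{X})\, d\tilde{X} = P(X'),
\end{align*}
so $P(X)$ is a stationary distribution of the chain on $\{X_k\}$. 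Finally, for step (iii): since the chain is assumed irreducible and to admit a stationary distribution, that distribution is unique, and as $P(X)$ is one such invariant distribution we conclude $\pi = P(X)$.

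The algebraic manipulations — the decomposition of $\mathcal{L}$, the Gibbs/KL argument, and the Fubini exchange — are routine. The step that deserves the most care is the uniqueness claim in (iii): on an uncountable state space, ``irreducibility'' must be read in the sense of $\phi$-irreducibility, and combined with the existence of an invariant probability measure this yields positive Harris recurrence and hence a unique invariant distribution (standard Markov-chain theory, e.g.\ Meyn and Tweedie). The proposition's hypotheses are tailored to invoke such a result, so no additional argument is needed there; a fully rigorous write-up would only need to pin down the precise sense in which the chain is assumed irreducible.
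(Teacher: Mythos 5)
Your proof is correct and follows essentially the same route as the paper's: both identify the optimal $P_\theta(X|\tilde{X})$ as the Bayes posterior $P(X|\tilde{X})$ via the KL decomposition of $\mathcal{L}$, and then conclude that alternating $\mathcal{C}(\tilde{X}|X)$ with this posterior leaves the joint $P(X)\mathcal{C}(\tilde{X}|X)$, and hence the marginal $P(X)$, invariant. The only difference is presentational: where the paper appeals to the chain being a Gibbs sampler for that joint, you verify invariance of $P(X)$ under the $X$-marginal transition kernel explicitly and settle uniqueness via irreducibility, which is if anything a more careful finish than the paper's.
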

\begin{proof}
If we look at the density $P(\tilde{X})=\int P(X) \mathcal{C}(\tilde{X}|X) d\tilde{X}$ that
we get for $\tilde{X}$ by applying $\mathcal{C}(\tilde{X}|X)$ to the training data from $P(X)$,
we can rewrite the loss as a KL divergence
\begin{equation*}
\int_{\tilde{X}} \int_X P(X)\mathcal{C}(\tilde{X}|X) \log P_\theta(X | \tilde{X}) dX d\tilde{X} =
- \textrm{KL}\left( P(X)\mathcal{C}(\tilde{X}|X) \| P_\theta(X | \tilde{X}) P(\tilde{X}) \right) + \textrm{cst}
\end{equation*}
where the constant is independent of $P_\theta(X | \tilde{X})$.
This expression is maximized when
we have a $P_\theta(X | \tilde{X})$ that satisfies
\begin{equation}
\label{eqn:two-expressions-for-joint}
P(X)\mathcal{C}(\tilde{X}|X) = P_\theta(X | \tilde{X}) P(\tilde{X}).
\end{equation}
In that case, we have that
\begin{equation*}
P_{\theta^*}(X | \tilde{X}) = \frac{ P(X)\mathcal{C}(\tilde{X}|X) }{ P(\tilde{X} } = P(X | \tilde{X})
\end{equation*}
where $P(X | \tilde{X})$ represents the true conditional that we get through
the usual application of Bayes' rule.

Now, when we sample iteratively between $\mathcal{C}(\tilde{X}_k|X_k)$
and $P_{\theta^*}(X_{k+1} | \tilde{X}_k)$ to get the Markov chain illustrated above,
we are performing Gibbs sampling. We understand what Gibbs sampling does,
and here we are sampling using the two possible ways of expressing the joint from
equation (\ref{eqn:two-expressions-for-joint}). This means that the stationary
distribution $\pi$ of the Markov chain will have $P(X)$ as marginal density
when we look only at the $X_k$ component of the chain.

\end{proof}

Beyond proving that ${\calP}(X|\tilde{X})$ is sufficient to reconstruct the data density,
Proposition \ref{prop:markov-chain-basic-gsn} also demonstrates a method of sampling
from a learned, parametrized model of the density, $P_\theta(X)$, by running a Markov chain
that alternately adds noise using ${\cal C}(\tilde{X}|X)$ and denoises by sampling from
the learned $P_\theta(X|\tilde{X})$, which is trained to approximate the true ${\calP}(X|\tilde{X})$.

Before moving on, we should pause to make an important point clear.
Alert readers may have noticed that ${\calP}(X|\tilde{X})$ and ${\calP}(X)$
can each be used to reconstruct the other given knowledge of ${\cal C}(\tilde{X}|X)$.
Further, if we assume that we have chosen a simple ${\cal C}(\tilde{X}|X)$
(say, a uniform Gaussian with a single width parameter),
then ${\calP}(X|\tilde{X})$ and ${\calP}(X)$ must both be of approximately the same complexity.
Put another way, we can never hope to combine a simple ${\cal C}(\tilde{X}|X)$ and
a simple ${\calP}(X|\tilde{X})$ to model a complex ${\calP}(X)$.
Nonetheless, it may still be the case that ${\calP}(X|\tilde{X})$ is easier to {\em model} than ${\calP}(X)$
due to reduced computational complexity in computing or approximating the
partition functions of the conditional distribution mapping corrupted input $\tilde{X}$ to 
the distribution of corresponding clean input $X$.
Indeed, because that conditional is going to be mostly assigning probability to $X$ locally around $\tilde{X}$,
${\calP}(X|\tilde{X})$ has only one or a few major modes, while ${\calP}(X)$ can have a very large number
of them.

So where did the complexity go? ${\calP}(X|\tilde{X})$ has fewer major modes than ${\calP}(X)$,
but {\em the location of these modes depends on the value of $\tilde{X}$}.
It is precisely this mapping from $\tilde{X} \rightarrow$ {\em mode location} that allows us
to trade a difficult density modeling problem for a supervised function approximation problem
that admits application of many of the usual supervised learning tricks.


\vspace{0.5cm}

In the Gaussian noise example, what happens is that the tails of the Gaussian
are exponentially damping all but the modes that are near $X$, thus preserving
the actual number of modes but considerably changing the number
of major modes. 
In the Appendix we also present one alternative line of reasoning based
on a corruption process $C(\tilde{X}|X)$ that has finite local support,
thus completely removing the modes that are not in the neighborhood of $X$.
We argue that even with such a corruption process,
the stationary distribution $\pi$ will match the original $P(X)$,
so long as one can still visit all the regions of interest through
a sequence of such local jumps.

Two potential issues with Proposition \ref{prop:markov-chain-basic-gsn}
are that 1) we are learning distribution $P_\theta(X | \tilde{X})$ based on
experimental samples so it is only asymptotically minimizing the desired loss,
and 2) we may not have enough capacity in our model to estimate $P_{\theta}(X | \tilde{X})$
perfectly.

The issue is that, when running a Markov chain for infinitely long using a
slightly imperfect $P_\theta(X | \tilde{X})$, these small differences may
affect the stationary distribution $\pi$ and compound over time. We are
not allowed to ``adjust'' the $P_\theta(X | \tilde{X})$ as the chain runs.

This is addressed by Theorem \ref{thm:schweitzer_inequality} cited in
the later Section \ref{sec:from-DAE-to-GSN}. That theorem gives us a result about continuity,
so that, for ``well-behaved'' cases, when $P_\theta(X | \tilde{X})$ is close
to $P(X | \tilde{X})$ we must have that the resulting stationary distribution
$\pi$ is close to the
original $P(X)$.



\svs{2}
\subsection{Walkback algorithm for training denoising auto-encoders}
\label{sec:walkback-description}
\svs{2}

In this section we describe the walkback algorithm which is
very similar to the method from Proposition 1, but helps training
to converge faster. It differs in the
training samples that are used, and the fact that the solution
is obtained through an iterative process. The parameter update
changes the corruption function, which changes the $\tilde{X}$ in
the training samples, which influences the next parameter update, and so on.

Sampling in high-dimensional spaces (like in experiments
in Section \ref{sec:walkback_experiment})
using a simple local corruption process (such as Gaussian or
salt-and-pepper noise) suggests that
if the corruption is too local, the DAE's behavior far
from the training examples can create spurious modes in the regions
insufficiently visited during training. More training iterations or 
increasing the amount of corruption noise
helps to substantially alleviate that problem, but we discovered an even
bigger boost by {\em training
the Markov chain to walk back towards the training examples}
(see Figure \ref{fig:walkback_into_drain}).
\begin{SCfigure}
\centering
\includegraphics[width=0.49\textwidth]{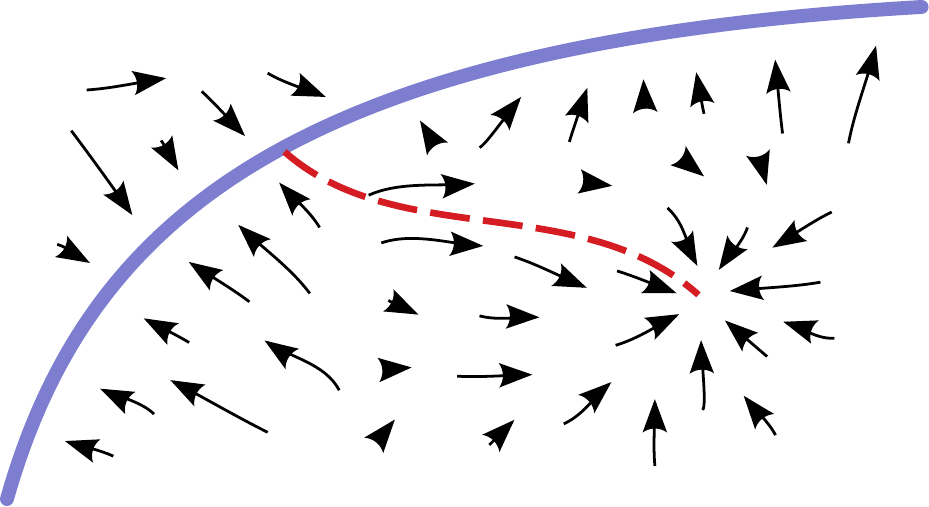}
\caption{
Walkback samples get attracted by spurious modes and contribute to removing them.
Segment of data manifold in violet and example walkback path in red dotted line,
starting on the manifold and going towards a spurious attractor.
The vector field represents expected moves of the chain,
for a unimodal $P(X| \tilde{X})$, with arrows from $\tilde{X}$ to $X$.
The name {\bf walkback} is because this procedure forces the model
to learn to {\em walk back from the random walk it generates,
towards the $X$'s in the training set}.
}
\label{fig:walkback_into_drain}
\vspace{-0.5cm}
\end{SCfigure}
We exploit
knowledge of the currently learned model $P_\theta(X|\tilde{X})$ to define the
corruption, so as to pick values of $\tilde{X}$ that would be
obtained by following the generative chain: wherever
the model would go if we sampled using the generative Markov chain starting
at a training example $X$,
we consider to be a kind of ``negative example'' $\tilde{X}$
from which the
auto-encoder should move away (and towards $X$).
The spirit of this procedure is thus very similar
to the CD-$k$ (Contrastive Divergence with $k$ MCMC steps) procedure
proposed to train RBMs~\citep{Hinton99-small,Hinton06}.

\vspace{1cm}

We start by defining the modified corruption process $\mathcal{C}_k(\tilde{X}|X)$
that samples $k$ times alternating between $\mathcal{C}(\tilde{X}|X)$ and the
current $P_\theta(X | \tilde{X})$.

We can express this recursively if we let $\mathcal{C}_1(\tilde{X}|X)$ be
our original $\mathcal{C}(\tilde{X}|X)$, and then define
\begin{equation}
\mathcal{C}_{k+1}(\tilde{X}|X) = \int_{\tilde{X'}} \int_{X'} \mathcal{C}(\tilde{X} | X') P_\theta( X' | \tilde{X}') \mathcal{C}_k(\tilde{X}'|X) dX' d\tilde{X}'
\end{equation} 
Note that this corruption distribution $\mathcal{C}_k(\tilde{X}|X)$ now involves
the distribution $P_\theta(X|\tilde{X})$ that we are learning.

With the help of the above definition of $\mathcal{C}_k(\tilde{X}|X)$, we define
the walkback corruption process $\mathcal{C}_{\textrm{\textsc{wb}}}(\tilde{X}|X)$.
To sample from $\mathcal{C}_{\textrm{\textsc{wb}}}$, we first draw a $k$
distributed according to some distribution, e.g., a geometric distribution 
with parameter $p=0.5$ and support on $k\in\{1,2,\ldots\}$),
and then we sample according to the corresponding $\mathcal{C}_k(\tilde{X}|X)$.
Other values than $p=0.5$ could be used, but we just want something convenient
for that hyperparameter.
Conceptually, the corruption process
$\mathcal{C}_{\textrm{\textsc{wb}}}$ means that, from a starting point $X$
we apply iteratively the original $\mathcal{C}$ and $P_\theta$, and then we flip
a coin to determine if we want to do it again. We re-apply until we lose the coin flip,
and then this gives us a final value for the sample $\tilde{X}$ based on $X$.

The walkback loss is given by
\begin{equation}
\label{eqn:loss_walkback_empirical}
\mathcal{L}_{\textrm{\textsc{wb}}} \simeq \frac{1}{N} \sum_{i=1}^N \log P_\theta(X^{(i)} | \tilde{X}^{(i)})
\end{equation}
for samples ${(X^{(i)}, k^{(i)}, \tilde{X}^{(i)})}$ drawn from $X \sim P(X)$, $k \sim \textrm{Geometric(0.5)}$ and
$\tilde{X} \sim \mathcal{C}_k(\tilde{X}|X)$.
Minimizing this loss is an iterative process because the samples used in the empirical expression
depend on the parameter $\theta$ to be learned. This iterated minimization is what we
call the \emph{walkback algorithm}. Samples are generated with the current parameter value $\theta_t$,
and then the parameters are modified to reduce the loss and yield $\theta_{t+1}$. We repeat until
the process stabilizes. In practical applications, we do not have infinite-capacity models
and we do not have a guarantee that the walkback algorithm should converge to some $\theta^*$.

\subsubsection{Reparametrization Trick}
\label{sec:reparametrization}

Note that we do not need to analytically marginalize over the latent variables involved:
we can back-propagate through the chain, considering it like a recurrent neural network
with noise (the corruption) injected in it. This is an instance of 
the so-called reparametrization trick, already proposed in~\citep{Bengio-arxiv2013,Kingma-arxiv2013,Kingma+Welling-ICLR2014}.
The idea is that we can consider sampling from a random variable conditionally on others (such as $\tilde{X}$
given $X$) as equivalent to applying a deterministic function taking as argument the conditioning variables as
well as some i.i.d. noise sources.
This view is particularly useful for the more general
GSNs introduced later, in which we typically choose the latent variables to be continuous, i.e.,
allowing to backprop through their sampling steps when exploiting the reparametrization trick.

\subsubsection{Equivalence of the Walkback Procedure}

With the walkback algorithm, one can also decide to include or not in the loss function
all the intermediate reconstruction distributions through which the trajectories pass. That is,
starting from some $X_0$, we sample
\begin{eqnarray*}
X_0 \sim P(X)                                     & \tilde{X_0} \sim \mathcal{C}(\tilde{X_0} | X_0), \\
X_1 \sim P_\theta( X_1 | \tilde{X}_0)             & \tilde{X_1} \sim \mathcal{C}(\tilde{X_1} | X_1) \\
X_2 \sim P_\theta( X_2 | \tilde{X}_1)             & \tilde{X_2} \sim \mathcal{C}(\tilde{X_2} | X_2) \\
\vdots                                            & \vdots \\
X_{k-1} \sim P_\theta( X_{k-1} | \tilde{X}_{k-2}) & \tilde{X}_{k-1} \sim \mathcal{C}(\tilde{X}_{k-1} | X_{k-1})
\end{eqnarray*}
and we use all the pairs $(X, \tilde{X_k})$ as training data for the
walkback loss at equation (\ref{eqn:loss_walkback_empirical}).

The following proposition looks very similar to Proposition \ref{prop:markov-chain-basic-gsn},
but it uses the walkback corruption instead of the original corruption ${\cal C}(\tilde{X}|X)$.
It is also an iterated process through which the current value of the parameter $\theta_t$
sets the loss function that will be minimized by the updated $\theta_{t+1}$.

\begin{proposition}
\label{prop:walkback}
Let $P(X)$ be the training distribution for which we only have empirical samples.
Let $\pi(X)$ be the implicitly defined asymptotic distribution of the Markov chain alternating
sampling from $P_\theta(X|\tilde{X})$ and ${\cal C}(\tilde{X}|X)$, where ${\cal C}$
is the original local corruption process.

If we assume that $P_\theta(X|\tilde{X})$ has sufficient capacity and
that the walkback algorithm converges (in terms of being stable in the updates
to $P_\theta(X|\tilde{X})$), then $\pi(x)=P(X)$.

That is, the Markov chain defined by alternating $P_\theta(X|\tilde{X})$ and ${\cal C}(\tilde{X}|X)$
gives us samples that are drawn from the same distribution as the training data.
\vs{1}
\end{proposition}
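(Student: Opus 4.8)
The plan is to reduce Proposition~\ref{prop:walkback} to Proposition~\ref{prop:markov-chain-basic-gsn} by showing that the fixed point of the walkback training procedure forces $P_\theta(X|\tilde X)$ to coincide with the true Bayes-rule conditional $P(X|\tilde X)$ associated with the original corruption $\mathcal C(\tilde X|X)$ and the data distribution $P(X)$. Once that identity is established, the chain alternating $P_\theta(X|\tilde X)=P(X|\tilde X)$ with $\mathcal C(\tilde X|X)$ is exactly the Gibbs sampler on the joint $P(X)\mathcal C(\tilde X|X)$ analyzed in Proposition~\ref{prop:markov-chain-basic-gsn}, whose $X$-marginal stationary distribution is $P(X)$, giving $\pi(X)=P(X)$.

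The first step is to write out, at a stationary point $\theta^*$ of the walkback algorithm, what the walkback loss is optimizing. Because a geometric distribution with $p=0.5$ puts positive mass on every $k\in\{1,2,\dots\}$, the loss $\mathcal L_{\textsc{wb}}$ in the infinite-capacity limit is (up to an additive constant) $-\sum_{k\ge 1} 2^{-k}\,\mathrm{KL}\!\left(P(X)\,\mathcal C_k(\tilde X|X)\,\big\|\,P_\theta(X|\tilde X)\,Q_k(\tilde X)\right)$, where $Q_k(\tilde X)=\int P(X)\mathcal C_k(\tilde X|X)\,dX$ is the marginal over $\tilde X$ induced at depth $k$, and crucially the $\mathcal C_k$ themselves are built from the current $P_\theta$. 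I would then argue that the key consistency equation~(\ref{eqn:two-expressions-for-joint}), $P(X)\mathcal C(\tilde X|X)=P_\theta(X|\tilde X)P(\tilde X)$, is a fixed point of this iteration: if $P_\theta=P(\cdot|\tilde X)$ solves the $k=1$ term exactly, then an induction on $k$ shows that each higher-order term $\mathcal C_k$ also becomes a bona fide alternating-Gibbs corruption for the \emph{same} joint $P(X)\mathcal C(\tilde X|X)$, so the Bayes conditional simultaneously minimizes every term in the sum — each KL hits zero. Hence the converged $\theta^*$ satisfies the same optimality condition extracted in the proof of Proposition~\ref{prop:markov-chain-basic-gsn}.

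The remaining step is the converse direction, i.e.\ that convergence of the walkback updates \emph{forces} this fixed point rather than merely admitting it. Here I would lean on the sufficient-capacity hypothesis together with the stability assumption: at a stable point the $\theta_{t+1}$ that minimizes the loss induced by $\theta_t$ must equal $\theta_t$, and since the $k=1$ term alone already has the unique minimizer $P_\theta(X|\tilde X)=P(X)\mathcal C(\tilde X|X)/P(\tilde X)$ (strict convexity of KL in the infinite-capacity regime, exactly as in Proposition~\ref{prop:markov-chain-basic-gsn}), any stable $\theta^*$ must realize it; the higher-$k$ terms are then automatically minimized too and do not pull the solution away. Finally I invoke Proposition~\ref{prop:markov-chain-basic-gsn} verbatim: with $P_{\theta^*}(X|\tilde X)=P(X|\tilde X)$, alternating it with $\mathcal C(\tilde X|X)$ is Gibbs sampling on the joint whose $X$-marginal is $P(X)$, so $\pi(X)=P(X)$.

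The main obstacle I anticipate is the converse/uniqueness step: arguing rigorously that the fixed-point iteration defining walkback cannot converge to a spurious $\theta^*$ for which the $k=1$ KL term is nonzero but is ``balanced'' against gradients coming from the $\theta$-dependence of the higher-order $\mathcal C_k$ terms. Making this airtight really requires controlling how $\mathcal C_k$ varies with $\theta$ near the fixed point; the clean way out, which I would adopt, is to note that the $k=1$ term's minimizer is independent of all the $\mathcal C_k$ for $k\ge 2$ and is globally optimal for those terms as well once attained, so there is no trade-off to balance — but stating this carefully (and flagging that without infinite capacity the argument is only heuristic, as the paper already concedes) is the delicate part of the write-up.
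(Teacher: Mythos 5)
Your argument hinges on the claim that the one-step Bayes posterior $P(X|\tilde X)\propto P(X)\,\mathcal C(\tilde X|X)$ simultaneously zeroes every KL term in the walkback loss, i.e.\ that it is also the conditional of $X_0$ given $\tilde X$ under each joint $P(X)\,\mathcal C_k(\tilde X|X)$. That claim is false for $k\ge 2$, and it is where the proof breaks. What is true when $P_\theta$ equals the one-step posterior is that each \emph{adjacent} pair $(X_j,\tilde X_j)$ along the chain has joint $P(X)\,\mathcal C(\tilde X|X)$; but the walkback loss pairs the original $X_0$ with the depth-$k$ corruption, and the backward conditional of $X_0$ given that corrupted value is a $k$-step smoothing conditional, not the one-step posterior. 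Concretely, take $X$ uniform on $\{0,1\}$ and $\mathcal C$ a symmetric flip with probability $\epsilon$: with $P_\theta$ equal to the one-step posterior, the depth-$k$ corrupted variable differs from $X_0$ by $2k-1$ independent $\epsilon$-flips, so the conditional under $P(X)\,\mathcal C_k$ flips with probability $\approx (2k-1)\epsilon \ne \epsilon$, and every $k\ge 2$ term of the loss is strictly suboptimal at your proposed minimizer. Hence the one-step posterior is in general not even a fixed point of the walkback iteration, and your uniqueness step (``the $k=1$ minimizer is automatically optimal for all higher $k$, so there is no trade-off'') collapses. The paper's own remark right after Proposition~\ref{prop:walkback} says as much: the walkback solution $P_{\theta^*}(X|\tilde X)$ is ``less local'' and requires \emph{more} capacity than the plain DAE solution, precisely because it must cover the $X_0$'s that could have given rise to $\tilde X$ after several steps.

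The paper's proof takes a different route that avoids this identification. At a stable point it treats the entire walkback corruption $\mathcal C_{\mathrm{wb}}$ (frozen at $\theta^*$) as the corruption in Proposition~\ref{prop:markov-chain-basic-gsn}: sufficient capacity plus stability imply $P_{\theta^*}$ is the Bayes conditional of the joint $P(X)\,\mathcal C_{\mathrm{wb}}(\tilde X|X)$, so alternating $\mathcal C_{\mathrm{wb}}$ with $P_{\theta^*}$ is Gibbs sampling on that joint and its stationary $X$-marginal is $P(X)$; it then passes to the chain alternating the original $\mathcal C$ with $P_{\theta^*}$. (That last step is only asserted with a ``hence'' in the paper; it can be justified by observing that the walkback kernel is the geometric mixture $K_{\mathrm{wb}}=\sum_k p_k K^k$ of powers of the original kernel $K$, and invariance of $P(X)$ under this mixture, together with irreducibility and the fact that $K$ commutes with $K_{\mathrm{wb}}$, forces invariance under $K$ itself.) To repair your write-up, drop the claim $P_{\theta^*}=P(X|\tilde X)$ --- it does not hold --- and characterize $P_{\theta^*}$ only as the conditional under the walkback joint, then argue at the level of the mixture kernel as above.
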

\vs{3}
\begin{proof}

Consider that during training, we produce a sequence of estimators
$P_{\theta_t}(X|\tilde{X})$ where $P_{\theta_t}$ corresponds to the $t$-th training iteration
(modifying the parameters after each iteration). With the walkback
algorithm, $P_{\theta_{t-1}}$ is used to obtain the corrupted samples $\tilde{X}$
from which the next model $P_{\theta_{t-1}}$ is produced.

If training converges in terms of ${\theta_t} \rightarrow \theta^*$, it means that
we have found a value of $P_{\theta^*}(X|\tilde{X})$ such that
\begin{equation*}
\theta^* = \textrm{argmin}_{\theta} \frac{1}{N} \sum_{i=1}^N \log P_\theta(X^{(i)} | \tilde{X}^{(i)})
\end{equation*}
for samples ${(X^{(i)}, \tilde{X}^{(i)})}$ drawn from $X \sim P(X)$,
$\tilde{X} \sim \mathcal{C}_{\textrm{\textsc{wb}}}(\tilde{X}|X)$.

By Proposition \ref{prop:markov-chain-basic-gsn}, we know that, regardless of the the corruption
$\mathcal{C}_{\textrm{\textsc{any}}}(\tilde{X}|X)$ used,
when we have a $P_{\theta}(X|\tilde{X})$ that minimizes optimally the loss
\begin{equation*}
\int_{\tilde{X}} \int_X P(X)\mathcal{C}_{\textrm{\textsc{any}}}(\tilde{X}|X) \log P_\theta(X | \tilde{X}) dX d\tilde{X}
\end{equation*}
then we can recover $P(X)$ by alternating between
$\mathcal{C}_{\textrm{\textsc{any}}}(\tilde{X}|X)$ and $P_\theta(X | \tilde{X})$.

Therefore, once the model is trained with walkback, the stationary distribution $\pi$
of the Markov chain that it creates has the same distribution $P(X)$ as the training data.


Hence if we alternate between the original corruption $\mathcal{C}(\tilde{X}|X)$
and the walkback solution $P_{\theta^*}( X | \tilde{X} )$, then the stationary distribution
with respect to $X$ is also $P(X)$.
\vs{3}
\end{proof}

Note that this proposition applies regardless of the value of geometric distribution used
to determine how many steps of corruption will be used. It applies whether we keep
all the samples along to the way, or only the one at the last step. It applies regardless
of if we use a geometric distribution to determine which $\mathcal{C}_k$ to select, or any
other type of distribution.

A consequence is that {\em the walkback training
algorithm estimates the same distribution as the original denoising algorithm}, but
may do it more efficiently (as we observe in the experiments),
by exploring the space of corruptions in a way that spends more time
where it most helps the model to kill off spurious modes.

The Markov chain that we get with walkback should also generally mix faster, be less
susceptible to getting stuck in bad modes, but it will require a $P_{\theta^*}( X | \tilde{X} )$
with more capacity than originally. This is because $P_{\theta^*}( X | \tilde{X} )$
is now less local, covering the values of the initial $X$ that could have given rise to the $\tilde{X}$
resulting from several steps of the Markov chain.

\svs{2}
\subsection{Walkbacks with individual scaling factors to handle uncertainty} \label{sec:swb}
The use of the proposed walkback training procedure is effective in suppressing the spurious modes 
in the learned data distribution. Although the convergence is guaranteed asymptotically, in practice, 
given limited model capacity and training data, it has been observed that 
the more walkbacks in training, the more difficult it is to 
maximize $P_{\theta}( X | \tilde{X} )$. 
This is simply because more and more noise is added in this procedure, 
resulting in $\tilde{X}$ that 
is further away from $X$, therefore a potentially more complicated reconstruction 
distribution. 

In other words, $P_{\theta}( X | \tilde{X} )$ needs to have the capacity to 
model increasingly 
complex reconstruction distributions. 
As a result of training, a simple, or usually unimodal $P_{\theta}( X | \tilde{X} )$ 
is most likely to 
learn a distribution with a larger 
uncertainty than the one learned without walkbacks in order to distribute some 
probability mass to the 
more complicated and multimodal distributions implied by the walkback training procedure. 
One possible 
solution to this problem is to use a multimodal reconstruction distribution such as in 
\citet{Sherjil-et-al-arxiv2014}, \cite{Larochelle+Murray-2011-small}, 
or~\citet{Dinh-et-al-arxiv2014}. We propose here another solution, which can be combined
with the above, that consists in allowing a different level of entropy for
different steps of the walkback.

\subsubsection{Scaling trick in binary $X$}\label{sec:swb_binary}
In the case of binary $X$, the most common choice of the 
reconstruction distribution is the factorized Multinoulli distribution where 
$P_{\theta}( X | \tilde{X} )=\prod_{i=1}^d P_{\theta}(X^i | \tilde{X})$ 
and $d$ is the dimensionality of $X$.
Each factor $P_{\theta}(X^i | \tilde{X})$ is modeled by a Bernoulli distribution 
that has its parameter $p_i=\sigmoid(f_i(\tilde{X}))$ where $f_i(\cdot)$ is a general nonlinear 
transformation realized by a neural network. We propose to use a different scaling factor 
$\alpha_k$ for different walkback steps, resulting in a new parameterization 
$p_i^k=\sigmoid(\alpha_k f_i(\tilde{X}))$ for the k-th walkback step, with $\alpha_k>0$
being learned.
$\alpha_k$ effectively scales the pre-activation of the sigmoid function according to 
the uncertainty or entropy associated with different walkback steps. 
Naturally, later reconstructions in the walkback sequence are less accurate because
more noise has been injected. Hence, given the $k_i$-th and $k_j$-th walkback steps 
that satisfy  $k_i < k_j$, the learning will tend to
result in $\alpha_{k_i} > \alpha_{k_j}$ because larger $\alpha_k$ correspond
to less entropy.

\subsubsection{Scaling trick in real-valued $X$}\label{sec:swb_real}
In the case of real-valued $X$, the most common choice of $P_{\theta}( X | \tilde{X} )$ 
is the factorized Gaussian. In particular, each factor $P_{\theta}(X^i | \tilde{X})$ is 
modeled by a Normal distribution with its parameters $\mu_i$ and $\sigma_i$. Using the 
same idea of learning separate scaling factors, we can parametrize it as 
$P_{\theta}(X^i | \tilde{X}) = \mathcal{N}(\mu_i, \alpha_k \sigma_i^2)$ for the $k$-th 
walkback step. $\alpha_k$ is positive and also learned. However, Given the $k_i$-th and 
$k_j$-th walkback steps 
that satisfy  $k_i < k_j$, the learning will 
result $\alpha_{k_i} < \alpha_{k_j}$, since in this case, larger $\alpha_k$ 
indicates larger entropy.

\subsubsection{Sampling with the learned scaling factors}
After learning the scaling factors $\alpha_k$ for $k$ different walkback steps, the 
sampling is straightforward. One noticeable difference is that we have learned $k$ 
Markov transition operators. Although, asymptotically all $k$ Markov chains 
generate the same distribution of $X$, in practice, they result in different distributions 
because of the different $\alpha_k$ learned. In fact, using $\alpha_1$ results the 
samples that are sharper and more faithful to the data distribution. We verify the 
effect of learning the scaling factor further in the experimental section.
    
\subsection{Extending the denoising auto-encoder to more general GSNs}
\label{sec:from-DAE-to-GSN}
\svs{2}


The denoising auto-encoder Markov chain is defined by 
$\tilde{X}_t \sim C(\tilde{X}|X_t)$ and 
$X_{t+1} \sim P_\theta(X | \tilde{X}_t)$, where $X_t$ alone can
serve as the state of the chain. The GSN framework generalizes the DAE in
two ways:
\begin{enumerate}
\item the ``corruption'' function is not fixed anymore but a parametrized function
that can be learned and corresponds to a ``hidden'' state 
(so we write the output of this function $H$ rather than $\tilde{X}$); and
\item that intermediate variable $H$ is now considered part of the state of the Markov chain,
i.e., its value of $H_t$ at step $t$ of the chain
depends not just on the previous visible $X_{t-1}$ but also 
on the previous state $H_{t-1}$.
\end{enumerate}
For this purpose, we define the
Markov chain associated with a GSN in terms of a visible $X_t$ and a 
latent variable $H_t$ as state
variables, of the form
\begin{eqnarray*}
   H_{t+1} &\sim& P_{\theta_1}(H | H_t, X_t) \\
   X_{t+1} &\sim& P_{\theta_2}(X | H_{t+1}).
\end{eqnarray*}
\begin{center}
\includegraphics[width=0.7\textwidth]{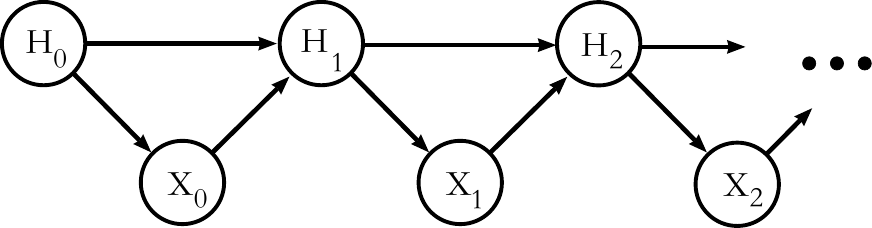} 
\end{center}
This definition makes denoising auto-encoders a special case of GSNs.
Note that, given that the distribution of $H_{t+1}$ may depend on a previous value of $H_t$,
we find ourselves with an extra $H_0$ variable added at the beginning of the chain.
This $H_0$ complicates things when it comes to training, but when
we are in a sampling regime we can simply wait a sufficient
number of steps to burn in.

\subsubsection{Main result about GSNs}

The next theoretical results give conditions for making the stationary distributions
of the above Markov chain match a target data-generating distribution. It basically
says that, in order to estimate the data-generating distribution $P(X_0)$, it is
enough to achieve two conditions.

The first condition is similar to the one we
obtain when minimizing denoising reconstruction error, i.e., we must
make sure that the reconstruction distribution $P(X_1 | H_1)$ approaches
the conditional distribution $P(X_0 | H_1)$, i.e., the $X_0$'s that could
have given rise to $H_1$.

The second condition is novel and regards the
initial state $H_0$ of the chain, which influences $H_1$. It says that
$P(H_0|X_0)$ must match $P(H_1|X_0)$. One way to achieve that is to initialize
$H_0$ associated with a training example $X_0$ with the previous value of $H_1$
that was sampled when example $X_0$ was processed. In the graphical model
in the statement of Theorem \ref{thm:noisy-reconstruction},
note how the arc relating $X_0$ and $H_0$ goes in the $X_0 \rightarrow H_0$ direction,
which is different from the way we would sample from the GSN (graphical
model above), where we have $H_0 \rightarrow X_0$. Indeed, during training,
$X_0$ is given, forcing it to have the data-generating distribution.

Note that Theorem \ref{thm:noisy-reconstruction} is there to provide us
with a guarantee about what happens when those two conditions are satisfied.
It is not originally meant to describe a training method.

In section \ref{sec:how-to-train-theorem3} we explain how to these conditions
could be approximately achieved.

\hspace{1em}

\pagebreak[2]

\begin{theorem}
\label{thm:noisy-reconstruction}
Let ${(H_t,X_t)}_{t=0}^\infty$ be the Markov chain defined by the following graphical model.
\begin{center}
\includegraphics[width=0.7\textwidth]{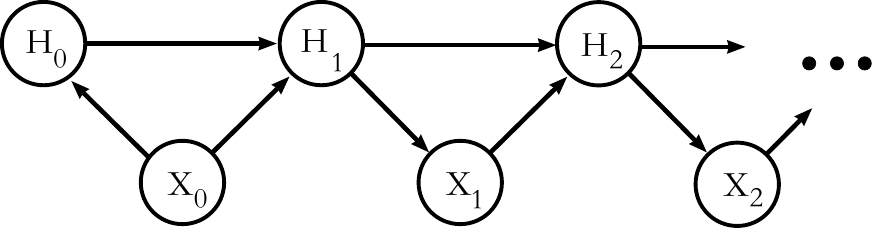} 
\end{center}
If we assume that the chain has a stationary distribution $\pi_{H,X}$, and that for every value of $(x,h)$ we have that
\begin{itemize}
\item all the $P(X_t = x | H_t=h) = g(x|h)$ share the same density for $t \geq 1$
\item all the $P(H_{t+1} = h | H_t=h', X_t=x) = f(h| h', x)$ shared the same density for $t \geq 0$
\item $P(H_0=h|X_0=x)=P(H_1=h|X_0=x)$
\item $P(X_1=x|H_1=h)=P(X_0=x|H_1=h)$
\end{itemize}
then for every value of $(x,h)$ we get that
\begin{itemize}
\item $P(X_0 = x | H_0=h) = g(x|h)$ holds, which is something that was assumed only for $t\geq 1$
\item $P(X_t = x, H_t = h) = P(X_0 = x, H_0 = h)$ for all $t\geq 0$
\item the stationary distribution $\pi_{H,X}$ has a marginal distribution $\pi_X$ such that $\pi\left(x\right) = P\left(X_0=x\right)$.
\end{itemize}
Those conclusions show that our Markov chain has the property that its samples in $X$ are drawn from the same distribution as $X_0$.
\end{theorem}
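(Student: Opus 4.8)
The plan is to show that the \emph{training joint} $q_0(x,h) := P(X_0 = x, H_0 = h) = P(X_0=x)\,P(H_0=h\mid X_0=x)$ is invariant under the GSN transition operator, and then to identify it with the stationary law $\pi_{H,X}$. Everything rests on one computation: the pair $(X_0, H_1)$ has the same joint law as $(X_0, H_0)$. Indeed,
\[
P(X_0 = x, H_1 = h) = P(X_0=x)\,P(H_1 = h\mid X_0 = x) = P(X_0=x)\,P(H_0 = h\mid X_0 = x),
\]
where the first equality marginalizes $H_0$ out of the graphical model (the $H_0\to H_1$ conditional being $f$) and the second uses the hypothesis $P(H_1 = h\mid X_0=x) = P(H_0=h\mid X_0=x)$; the right-hand side is exactly $P(X_0=x,H_0=h)$. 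Marginalizing and then conditioning, this gives $P(H_0 = h) = P(H_1 = h)$ as well as $P(X_0 = x\mid H_0 = h) = P(X_0 = x\mid H_1 = h)$.

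From here the first conclusion is immediate: by the hypothesis $P(X_1=x\mid H_1=h)=P(X_0=x\mid H_1=h)$ combined with the $t=1$ instance of $P(X_t=x\mid H_t=h)=g(x\mid h)$, we get $P(X_0=x\mid H_1=h)=g(x\mid h)$; using the identity of conditionals from the previous paragraph then yields $P(X_0=x\mid H_0=h)=g(x\mid h)$, which was assumed only for $t\ge1$, now extended to $t=0$.

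Next I would check that $q_0$ is a fixed point of the chain's transition operator $T$. The transition $(X_t,H_t)\mapsto(X_{t+1},H_{t+1})$ is time-homogeneous for all $t\ge0$: it draws $H_{t+1}\sim f(\cdot\mid H_t,X_t)$ and then $X_{t+1}$ from a conditional that, by the shared-density hypotheses, is $g(\cdot\mid H_{t+1})$. Hence the joint $q_1$ of $(X_1,H_1)$ equals $Tq_0$. But, using the two facts just established, $q_1(x,h) = P(H_1=h)\,g(x\mid h) = P(H_0=h)\,g(x\mid h) = P(H_0=h)\,P(X_0=x\mid H_0=h) = q_0(x,h)$. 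So $Tq_0=q_0$, and an immediate induction ($q_{t+1}=Tq_t=Tq_0=q_0$) gives the second conclusion, $P(X_t=x,H_t=h)=P(X_0=x,H_0=h)$ for all $t\ge0$. Assuming the stationary distribution is unique (in the spirit of Proposition~\ref{prop:markov-chain-basic-gsn}), $\pi_{H,X}=q_0$, so its $X$-marginal is $\int q_0(x,h)\,dh = P(X_0=x)$, the third conclusion.

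The step demanding the most care is the opening chain of marginalization/conditioning identities, together with bookkeeping the fact that in the \emph{training} graphical model the arc runs $X_0\to H_0$ — the reverse of the sampling direction $H_0\to X_0$ — and recognizing that the two extra hypotheses $P(H_0\mid X_0)=P(H_1\mid X_0)$ and $P(X_1\mid H_1)=P(X_0\mid H_1)$ are precisely the obstructions that must be assumed away for $q_0$ to be $T$-invariant. A caveat worth flagging: the conclusion $\pi_X=P(X_0)$ relies on uniqueness of the stationary distribution; without it one only gets that the training joint $q_0$ is \emph{a} stationary law.
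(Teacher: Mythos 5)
Your proof is correct and follows essentially the same route as the paper's: both establish $P(H_0)=P(H_1)$ and the equality in distribution of $(X_1,H_1)$ and $(X_0,H_0)$ from the two compatibility hypotheses, extend $g$ to $t=0$, and then propagate equality of all the joints by time-homogeneity, your ``$q_0$ is a fixed point of the joint transition operator'' induction being just a repackaging of the paper's observation that the kernel $\int f(h|h',x)\,g(x|h')\,dx$ is identical at every step. Your caveat about uniqueness of the stationary distribution is fair, and the paper implicitly resolves it the same way you do, by identifying $\pi_{H,X}$ with the common law of $(X_t,H_t)$.
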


\begin{proof}
The proof hinges on a few manipulations done with the first variables to show
that $P(X_t = x | H_t=h) = g(x|h)$, which is assumed for $t \geq 1$, also holds for $t=0$.

For all $h$ we have that
\begin{eqnarray*}
P(H_0=h) & = & \int P(H_0=h|X_0=x)P(X_0=x) dx \\
         & = & \int P(H_1=h|X_0=x)P(X_0=x) dx \hspace{1em} \textrm{(by hypothesis)} \\
         & = & P(H_1=h).
\end{eqnarray*}

The equality in distribution between $(X_1,H_1)$ and $(X_0,H_0)$ is obtained with
\begin{eqnarray*}
P(X_1=x, H_1=h) & = & P(X_1=x | H_1=h) P(H_1=h) \\
                & = & P(X_0=x | H_1=h) P(H_1=h) \hspace{1em} \textrm{(by hypothesis)} \\
                & = & P(X_0=x, H_1=h) \\
                & = & P(H_1=h | X_0=x)P(X_0=x) \\
                & = & P(H_0=h | X_0=x)P(X_0=x) \hspace{1em} \textrm{(by hypothesis)} \\
                & = & P(X_0=x, H_0=h).
\end{eqnarray*}

Then we can use this to conclude that
\begin{eqnarray*}
         & P(X_0 = x, H_0 = h) = P(X_1 = x, H_1 = h) \\
\implies & P(X_0 = x |H_0 = h) = P(X_1 = x | H_1 = h) = g(x|h)
\end{eqnarray*}
so, despite the arrow in the graphical model being turned the other way, we have that
the density of $P(X_0 = x |H_0 = h)$ is the same as for all other $P(X_t = x |H_t = h)$ with $t \geq 1$.

Now, since the distribution of $H_1$ is the same as the distribution of $H_0$, and
the transition probability $P(H_1=h|H_0=h')$ is entirely defined by the $(f,g)$ densities
which are found at every step for all $t\geq 0$, then we know that $(X_2, H_2)$ will have the
same distribution as $(X_1, H_1)$. To make this point more explicitly,
\begin{eqnarray*}
 P(H_1 = h | H_0 = h') & = & \int P(H_1 = h | H_0 = h', X_0 = x)  P(X_0 = x | H_0 = h') dx \\
                       & = & \int f(h | h', x)  g(x| h') dx \\
                       & = & \int P(H_2 = h | H_1 = h', X_1 = x)  P(X_1 = x | H_1 = h') dx \\
                       & = & P(H_2 = h | H_1 = h')
\end{eqnarray*}
This also holds for $P(H_3|H_2)$ and for all subsequent $P(H_{t+1}|H_t)$.
This relies on the crucial step where we demonstrate that $P(X_0 = x | H_0 = h) = g(x|h)$.
Once this was shown, then we know that we are using the
same transitions expressed in terms of $(f,g)$ at every step.

Since the distribution of $H_0$ was shown above to be the same as the distribution of $H_1$,
this forms a recursive argument that shows that all the $H_t$ are equal in distribution to $H_0$.
Because $g(x|h)$ describes every $P(X_t = x | H_t = h)$, we have that
all the joints $(X_t, H_t)$ are equal in distribution to $(X_0, H_0)$.

This implies that the stationary distribution $\pi_{X,H}$ is the same as that of $(X_0, H_0)$.
Their marginals with respect to $X$ are thus the same.
\end{proof}

Intuitively, the proof of Theorem \ref{thm:noisy-reconstruction} achieves its objective by
forcing all the $(H_t, X_t)$ pairs to share the same joint distribution, thus making
the marginal over $X_t$ as $t \rightarrow \infty$ (i.e. the stationary distribution of the chain $\pi$) 
be the same as $P(X_0)$, i.e., the data distribution. On the other hand, because it is
a Markov chain, its stationary distribution does not depend on the initial conditions,
making the model generate from an estimator of $P(X_0)$ for any initial condition.

To apply Theorem \ref{thm:noisy-reconstruction} in a context
where we use experimental data to learn a model, we would like
to have certain guarantees concerning the robustness of
the stationary density $\pi_X$. When a model lacks capacity, or
when it has seen only a finite number of training examples,
that model can be viewed as a perturbed version of the exact
quantities found in the statement of
Theorem \ref{thm:noisy-reconstruction}.

\subsubsection{A note about consistency}
\label{sec:a-note-about-consistency-from-Schweitzer}

A good overview of results from perturbation theory
discussing stationary distributions in finite state Markov chains
can be found in \citep{Cho2000comparisonperturbation}.
We reference here only one of those results.

\begin{theorem}
\label{thm:schweitzer_inequality}
Adapted from \citep{Schweitzer1968perturbation}

Let $K$ be the transition matrix of a finite state, irreducible,
homogeneous Markov chain. Let $\pi$ be its stationary distribution
vector so that $K\pi=\pi$. Let $A=I-K$ and \mbox{$Z=\left(A+C\right)^{-1}$}
where $C$ is the square matrix whose columns all contain $\pi$.
Then, if $\tilde{K}$ is any transition matrix (that also satisfies
the irreducible and homogeneous conditions) with stationary distribution
$\tilde{\pi}$, we have that
\[
\left\Vert \pi-\tilde{\pi}\right\Vert _{1}\leq\left\Vert Z\right\Vert _{\infty}\left\Vert K-\tilde{K}\right\Vert _{\infty}.
\label{eqn:schweitzer_inequality}
\]

\end{theorem}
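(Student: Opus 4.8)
The plan is to carry out the standard matrix-perturbation argument for stationary vectors of finite Markov chains: everything reduces to converting the two fixed-point equations $K\pi=\pi$ and $\tilde K\tilde\pi=\tilde\pi$ into a single \emph{exact} linear relation between $\pi-\tilde\pi$ and the perturbation $K-\tilde K$, and then inverting that relation through $Z$.

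First I would record the elementary conservation facts. Writing $\mathbf 1$ for the all-ones vector, both $K$ and $\tilde K$ being transition matrices gives $\mathbf 1^{\top}(K-\tilde K)=0$, and both $\pi,\tilde\pi$ being probability vectors gives $\mathbf 1^{\top}(\pi-\tilde\pi)=0$. Subtracting the two fixed-point equations and inserting $\pm\,K\tilde\pi$ yields the identity
\[
A(\pi-\tilde\pi)=(I-K)(\pi-\tilde\pi)=(K-\tilde K)\,\tilde\pi,
\]
which holds verbatim, not merely as a linearization.

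Second I would prove that $A+C$ is invertible — the one place where irreducibility is essential. If $(A+C)v=0$, applying $\mathbf 1^{\top}$ and using $\mathbf 1^{\top}A=\mathbf 1^{\top}(I-K)=0$ together with $\mathbf 1^{\top}C=\mathbf 1^{\top}$ forces $\mathbf 1^{\top}v=0$; hence $Cv=\pi(\mathbf 1^{\top}v)=0$, so $Av=0$, so $v\in\ker A$. By Perron--Frobenius, irreducibility makes the unit eigenspace of $K$ one-dimensional, i.e. $\ker(I-K)=\operatorname{span}\{\pi\}$, and $\mathbf 1^{\top}v=0$ then forces $v=0$. With $Z=(A+C)^{-1}$ defined, I would note the fundamental-matrix identities that follow from checking that $\pi$ and $\mathbf 1^{\top}$ are a right and a left eigenvector of $A+C$ at eigenvalue $1$: namely $Z\pi=\pi$, $\mathbf 1^{\top}Z=\mathbf 1^{\top}$, $ZC=CZ=C$, and therefore $ZA=Z(A+C)-ZC=I-C$. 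Applying $Z$ to the identity above and using $C(\pi-\tilde\pi)=\pi\,\mathbf 1^{\top}(\pi-\tilde\pi)=0$ collapses it to the closed form
\[
\pi-\tilde\pi=Z\,(K-\tilde K)\,\tilde\pi.
\]

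Finally I would take $\|\cdot\|_1$ of both sides and split the right-hand side by submultiplicativity of the induced operator norm appropriate to these matrices, peeling off $Z$ and then $K-\tilde K$ and using $\|\tilde\pi\|_1=1$, to obtain $\|\pi-\tilde\pi\|_1\le\|Z\|_{\infty}\|K-\tilde K\|_{\infty}$; some mild care is needed that the norm subscript matching the stated bound is the one for these matrices acting on distribution vectors (equivalently, the usual $\infty$-norm of their transposes in the row-stochastic picture). I expect the main obstacle to be precisely the invertibility of $A+C$ and the extraction of $ZA=I-C$: this is what upgrades the perturbation relation to an exact closed form for $\pi-\tilde\pi$, and it is the only step that genuinely uses irreducibility; once the closed form is in hand, the norm estimate is routine.
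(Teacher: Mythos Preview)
The paper does not prove this theorem: it is quoted from \citet{Schweitzer1968perturbation} (with a pointer to the survey of \citet{Cho2000comparisonperturbation}) and used as a black-box continuity result, so there is no in-paper argument to compare against.

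That said, your proposal is correct and is exactly the classical Schweitzer derivation. The key identity $A(\pi-\tilde\pi)=(K-\tilde K)\tilde\pi$ is right; your invertibility argument for $A+C$ via Perron--Frobenius is the standard one and correctly uses irreducibility; the fundamental-matrix identities $Z\pi=\pi$, $ZC=C$, and $ZA=I-C$ are exactly what collapse things to the closed form $\pi-\tilde\pi=Z(K-\tilde K)\tilde\pi$. Your caveat about the norm labeling is also apt: in the column-stochastic convention $K\pi=\pi$ used here, the induced operator norm on $\ell_1$ is the max absolute column sum, which equals the usual $\|\cdot\|_\infty$ of the transpose (i.e., of the row-stochastic matrix one would write in the more common convention). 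With that reading the final bound follows immediately from submultiplicativity and $\|\tilde\pi\|_1=1$.
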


This theorem covers the case of discrete data by showing how the stationary
distribution is not disturbed by a great amount when the transition
probabilities that we learn are close to their correct values. We are
talking here about the transition between steps of the chain
$(X_0, H_0), (X_1, H_1), \ldots, (X_t, H_t)$, which are defined
in Theorem \ref{thm:noisy-reconstruction} through the $(f,g)$ densities.

\subsubsection{Training criterion for GSNs}
\label{sec:how-to-train-theorem3}

So far we avoided discussing the training criterion for a GSN.
Various alternatives exist, but this analysis is for future work.
Right now Theorem \ref{thm:noisy-reconstruction} suggests the following
rules :
\begin{itemize}
\item Define $g(x|h) = P(X_1=x|H_1=h)$, i.e., the {\em decoder}, to be the estimator for \mbox{$P(X_0=x|H_1=h)$},
e.g. by training an estimator of this conditional distribution from the samples $(X_0, H_1)$,
with reconstruction likelihood, $\log P(X_1 = x_0 | H_1)$, as this
would asymptotically achieve the condition  $P(X_0 | H_1) = P(X_1 | H_1)$. To see that this 
is true, consider the following.

We sample $X_0$ from $P(X_0)$ (the data-generating distribution) and $H_1$ from $P(H_1|H_0,X_0)$. 
Refer to one of the next bullet points for an explanation
about how to get values for $H_0$ to be used when sampling
from $P(H_1 | H_0, X_0)$ here.
This creates a joint distribution over $(X_0, H_1)$
that has $P(X_0 | H_1)$ as a derived conditional. 
Then we train the parameters of a model $P_\theta(X_1|H_1)$ to maximize the log-likelihood
\begin{align}
    & \mathbb{E}_{x_0\sim P(X_0), h_1\sim P(H_1|x_0)} [  \log P_\theta(X_1=x_0 | h_1) ]  \nonumber \\
   =& \int_{x_0, h_1} P(x_0,h_1) \log P_\theta(X_1=x_0 | H_1=h_1) dx_0 dh_1 \nonumber\\
   =& \int_{h_1} P(h_1) \int_{x_0} P(X_0=x_0 | H_1 = h_1) \log P_\theta(X_1=x_0 | h_1) dx_0 dh_1 \nonumber\\
   =& -\mathbb{E}_{H_1}[ \textrm{KL}(P(X_0|H_1) || P_\theta(X_1|H_1)) ] + {\rm const.}
\end{align}  
where the constant does not depend on $\theta$, and thus the log-likelihood is maximized when 
\[
  P_\theta(X_1=x|H_1=h) = P(X_0=x|H_1=h).
\]

\item Pick the transition distribution $f(h | h', x)$ to be useful, i.e., training it towards the
same objective, i.e., sampling an $h'$ that makes it easy to reconstruct $x$. One can think 
of $f(h | h', x)$ as the {\em encoder}, except that it has a state which depends on its previous
value in the chain.
\item To approach the condition $P(H_0 | X_0) = P(H_1 | X_0)$, one interesting possibility is the following.
For each $X_0$ in the training set, iteratively sample $H_1 | (H_0, X_0)$ and 
substitute the value of $H_1$ as the updated value of $H_0$. Repeat until you have achieved a kind of ``burn in''.
Note that, after the training is completed, when we use the chain for sampling, the samples that we get from 
its stationary distribution do not depend on $H_0$. Another option is to store the value of $H_1$ that was
sampled for the particular training example $x_0$, and re-use it as the initial $H_0$ the next time that $x_0$
is presented during training. These techniques of substituting $H_1$ into $H_0$ are only
required during training. In our experiments, we actually found that a fixed $H_0=0$
worked as well, so we have used this simpler approach in the reported experiments.
\item The rest of the chain for $t \geq 1$ is defined in terms of $(f,g)$.
\end{itemize}

\svs{2}
\subsection{Random variable as deterministic function of noise}
\label{sec:deterministic-function-of-noise}
\svs{2}


There several equivalent ways of
expressing a GSN. One of the interesting
formulations is to use deterministic functions
of random variables to express the densities $(f,g)$
used in Theorem \ref{thm:noisy-reconstruction}.
With that approach, we define $H_{t+1}=\otherf_{\theta_1}(X_t,Z_t,H_t)$
for some independent noise source $Z_t$,
and we insist that $X_t$ cannot be recovered
exactly from $H_{t+1}$, to avoid a situation in which the Markov
chain would not be ergodic. The advantage of that formulation
is that one can directly back-propagate the reconstruction
log-likelihood $\log P(X_1=x_0 | H_1=f(X_0,Z_0,H_0))$ into all the parameters
of $f$ and $g$, using the reparametrization trick discussed above
in Section~\ref{sec:reparametrization}.
This method is described in \mbox{\citep{Williams-1992}}.


In the setting described at the beginning of Section \ref{sec:gsn},
the function playing the role of the ``encoder'' was fixed for the purpose
of the theorem, and we showed that learning only the ``decoder'' part (but a
sufficiently expressive one) sufficed. In this
setting we are learning both, which can cause certain
broken behavior.

One problem would be if the created Markov
chain failed to converge to a stationary distribution.
Another such problem could be that the function $\otherf(X_t,Z_t,H_t)$
learned would try to ignore the noise $Z_t$, or not make the best use
out of it. In that case, the reconstruction distribution would
simply converge to a Dirac at the input $X$.  This is
the analogue of the constraint on auto-encoders that is needed to prevent
them from learning the identity function. Here, we must design the family
from which $f$ and $g$ are learned 
such that when the noise $Z$ is injected, there are always
several possible values of $X$ that could have been the correct original
input.

Another extreme case to think about is when $\otherf(X,Z,H)$ is overwhelmed
by the noise and has lost all information about $X$. In that case the theorems
are still applicable while giving uninteresting results: the learner must
capture the full distribution of $X$ in $P_{\theta_2}(X|H)$ because
the latter is now equivalent to $P_{\theta_2}(X)$, since $\otherf(X,Z,H)$
no longer contains information about $X$. This illustrates
that when the noise is large, the reconstruction distribution (parametrized
by $\theta_2$) will need to have the expressive power to represent
multiple modes. Otherwise, the reconstruction will tend to capture 
an average output, which would visually look like a fuzzy combination
of actual modes. In the experiments performed here, we have only
considered unimodal reconstruction distributions (with factorized 
outputs), because we expect that even if ${\calP}(X|H)$
is not unimodal, it would be dominated by a single mode when the
noise level is small. However, 
future work should investigate multimodal alternatives.

A related element to keep in mind is that one should pick the
family of conditional distributions $P_{\theta_2}(X|H)$ so that
one can sample from them and one can easily train them when given
$(X,H)$ pairs, e.g., by maximum likelihood.

\svs{2}
\subsection{Handling missing inputs or structured output}
\label{sec:missing_inputs}
\svs{2}

In general, a simple way to deal with missing inputs is to clamp the
observed inputs and then run the Markov chain with the constraint
that the observed inputs are fixed and not resampled at each time
step, whereas the unobserved inputs are resampled each time,
\emph{conditioned on the clamped inputs}.

In the context of the GSN described in Section \ref{sec:from-DAE-to-GSN}
using the two distributions
\begin{eqnarray*}
H_{t+1} & \sim & P_{\theta_{1}}(H|H_{t},X_{t})\\
X_{t+1} & \sim & P_{\theta_{2}}(X|H_{t+1})
\end{eqnarray*}
we need to make some adjustments to $P_{\theta_{2}}(X|H_{t+1})$ to
be able to sample $X$ conditioned on some of its components being
clamped. We also focus on the case where there are no connections
between the $H_t \rightarrow H_{t+1}$. That is, we study the
more basic situation where we train an denoising auto-encoder
instead of a GSN that has connections between the hidden units.

Let $\mathcal{S}$ be a set of values that $X$ can take. For example, $\mathcal{S}$
can be a subset of the units of $X$ that are fixed to given values.
We can talk about clamping $X\in \mathcal{S}$, or just ``clamping $\mathcal{S}$''
when the meaning is clear. In order to sample from a distribution
with clamped $\mathcal{S}$, we need to be able to sample from
\begin{eqnarray*}
H_{t+1} & \sim & P_{\theta_{1}}(H|X_{t})\\
X_{t+1} & \sim & P_{\theta_{2}}(X|H_{t+1},X\in S).
\end{eqnarray*}
This notation might be strange at first, but it's as legitimate as
conditioning on $0<X$ when sampling from any general distribution.
It involves only a renormalization of the resulting distribution
$P_{\theta_{2}}(X|H_{t+1},X\in \mathcal{S})$.

In a general scenario with two conditional distributions
$(P_{\theta_{1}},P_{\theta_{2}})$
playing the roles of $f(x|h)$ and $g(h|x)$, i.e. the encoder and
decoder, we can make certain basic assumptions so that the asymptotic
distributions of $(X_{t},H_{t})$ and $(X_{t},H_{t+1})$ both exist.
There is no reason to think that those two distributions are the same,
and it is trivial to construct counter-examples where they differ
greatly.

However, when we train a DAE with infinite capacity,
Proposition \ref{prop:markov-chain-basic-gsn} shows
that the optimal solution leads to those two joints being the same.
That is, the two trained conditional distributions $f(h|x)$ and $g(x|h)$
are \emph{mutually compatible}. They form a single joint
distribution over $(X,H)$. We can sample from it by the usual Gibbs
sampling procedure. Moreover, the marginal distribution over $X$
that we obtain will match that of the training data. This is the motivation
for Proposition \ref{prop:markov-chain-basic-gsn}.

Knowing that Gibbs sampling produces the desired joint distribution
over $(X,H)$, we can now see how it would be possible to sample from
$(X,H)|(X\in \mathcal{S})$ if we are able to sample from $f(h|x)$ and $g(x|h,x\in \mathcal{S})$.
Note that it might be very hard to sample from $g(x|h,x\in \mathcal{S})$, depending
on the particular model used. We are not making any assumption on
the factorization of $g(x|h)$, much like we are not making any assumption
on the particular representation (or implementation) of $g(x|h)$.

In section \ref{sec:a-note-about-consistency-from-Schweitzer}
we address a valid concern about the possibility
that, in a practical setting, we might not train $g(x|h)$ to achieve
an exact match the density of $X|H$. That $g(x|h)$ may be very close
to the optimum, but it might not be able to achieve it due to its
finite capacity or its particular parametrization. What does that
imply about whether the asymptotic distribution of the Markov chain
obtained experimentally compared to the exact joint $(X,H)$ ?

We deal with this issue in the same way as we dealt with it
when it arose in the context of Theorem \ref{thm:noisy-reconstruction}.
The best that we can do
is to refer to Theorem \ref{thm:schweitzer_inequality} and rely
on an argument made in the context of discrete states that would
closely approximate our situation (which is in either discrete or
continuous space).

Our Markov chain is homogeneous because it does not change with time.
It can be made irreducible my imposing very light constraints on
$f(h|x)$ so that $f(h|x)>0$ for all $(x,h)$. This happens automatically
when we take $f(h|x)$ to be additive Gaussian noise (with fixed parameters)
and we train only $g(x|h)$. In that case, the optimum $g(x|h)$ will
assign non-zero probability weight on all the values of $x$.

We cannot guarantee that a non-optimal $g(x|h)$ will not be broken
in some way, but we can often get $g(x|h)$ to be non-zero by selecting
a parametrized model that cannot assign a probability of exactly zero to an $x$.
Finally, to use Theorem \ref{thm:schweitzer_inequality} we need to
have that the constant $\left\Vert Z\right\Vert _{\infty}$ from that
Theorem \ref{thm:schweitzer_inequality} to be non-zero.
This is a bit more complicated to enforce, but it is something
that we will get if the transition matrix stays away from the identity matrix.
That constant is zero when the chain is close to being degenerate.

Theorem \ref{thm:schweitzer_inequality} says that,
with those conditions verified, we have
that an arbitrarily good $g(x|h)$ will lead to an arbitrarily good
approximation of the exact joint $(X,H)$.

Now that we know that this approach is grounded in sound theory, it
is certainly reasonable to try it in experimental settings in which
we are not satisfying all the requirements, and see if the results
are useful or not. We would refer the reader to our experiment
shown in Figure \ref{fig:samples+inpainting_small}
where we clamp certain units and resample the rest.

To further understand the conditions for obtaining the appropriate
conditional distributions on some of the visible inputs when others
are clamped, we consider below sufficient and necessary conditions
for making the stationary distribution of the clamped chain
correspond to the normalized distribution (over the allowed values)
of the unclamped chain.

\begin{proposition}
\label{prop:clamping}

Let $f(h|x)$ and $g(x|h)$ be the encoder and decoder functions
such that they are mutually compatible (i.e. they represent a
single joint distribution for which we can sample using Gibbs sampling).
Let $\pi(X,H)$ denote that joint.

Note that this happens when we minimize
\[
\mathbb{E}_{X} \left[ \log \int g(x|h)f(h|x) dh \right]
\]
or when we minimize the walkback loss (see Proposition \ref{prop:walkback}).

Let $\mathcal{S} \subseteq \mathcal{X}$ be a set of values that $X$ can take
(e.g. some components of $X$ can be assigned certain fixed values),
and such that $\mathbb{P}(X\in\mathcal{S})>0$.
Let $\pi(x | x \in \mathcal{S})$ denote the conditional distribution of $\pi(X,H)$
on which we marginalize over $H$ and condition on $X \in \mathcal{S}$. That is,
\[
\pi(x | x \in \mathcal{S}) \propto \pi(x) \mathbb{I}(x \in \mathcal{S}) = 
\frac{\int_h \pi(x,h) \mathbb{I}(x \in \mathcal{S}) dh}
{\int_x \int_h \pi(x,h) \mathbb{I}(x \in \mathcal{S}) dh dx}.
\]

Let $g(x|h,x \in \mathcal{S})$ denote a restriction of the decoder function
that puts probability weight only on the values of $x\in\mathcal{S}$. That is,
\[
g(x|h,x \in \mathcal{S}) \propto g(x|h) \mathbb{I}(x \in \mathcal{S}).
\]

\textbf{If} we start from some $x_0 \in \mathcal{S}$ and we run a Markov chain
by alternating between $f(h|x)$ and $g(x|h,x \in\mathcal{S})$,
\textbf{then} the asymptotic distribution of that chain with respect to $X$
will be the same as $\pi(x | x \in \mathcal{S})$.

\end{proposition}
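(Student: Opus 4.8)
The plan is to recognize the clamped chain as nothing more than an ordinary two-block Gibbs sampler for a suitably \emph{restricted} joint distribution, and then to invoke the same convergence-of-Gibbs-sampling reasoning already used in Proposition~\ref{prop:markov-chain-basic-gsn}. First I would unpack the hypothesis that $f(h|x)$ and $g(x|h)$ are mutually compatible: by definition this means there is a single joint density $\pi(x,h)$ whose conditionals are exactly $\pi(h|x)=f(h|x)$ and $\pi(x|h)=g(x|h)$, so that the \emph{unclamped} chain alternating $f$ and $g$ is precisely the Gibbs sampler with invariant law $\pi$. The parenthetical remark in the statement — that this holds when one minimizes $\mathbb{E}_X[\log\int g(x|h)f(h|x)\,dh]$ or the walkback loss — is exactly what Proposition~\ref{prop:markov-chain-basic-gsn} and Proposition~\ref{prop:walkback} deliver, so I would simply cite them.

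Next I would introduce the restricted joint $\tilde\pi(x,h)\propto \pi(x,h)\,\mathbb{I}(x\in\mathcal{S})$, i.e.\ $\pi$ conditioned on $X\in\mathcal{S}$, which is well defined since $\mathbb{P}(X\in\mathcal{S})>0$, and whose $X$-marginal is by construction the target $\pi(x|x\in\mathcal{S})$. The only thing that actually needs checking is its two conditionals. Because the indicator $\mathbb{I}(x\in\mathcal{S})$ does not depend on $h$, it factors out of the $h$-integral in the denominator, giving $\tilde\pi(h|x)=\pi(x,h)/\pi(x)=f(h|x)$ for every $x\in\mathcal{S}$. Conversely, fixing $h$ and renormalizing in $x$ gives $\tilde\pi(x|h)\propto \pi(x|h)\,\mathbb{I}(x\in\mathcal{S})=g(x|h)\,\mathbb{I}(x\in\mathcal{S})$, which is exactly $g(x|h,x\in\mathcal{S})$ by its definition in the statement. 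Hence alternately sampling $h\sim f(h|x)$ and $x\sim g(x|h,x\in\mathcal{S})$ is, step for step, the systematic-scan Gibbs sampler targeting $\tilde\pi$.

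Finally I would conclude: since the chain is started at some $x_0\in\mathcal{S}$ it stays supported on $\{x\in\mathcal{S}\}$ at every visible step, and — under the same irreducibility/ergodicity provisos used for the earlier propositions (which hold, for instance, when $f(h|x)>0$ everywhere and $\mathcal{S}$ is ``connected enough'' that $g(\cdot|h,x\in\mathcal{S})$ can reach all of $\mathcal{S}$) — this Gibbs sampler converges to its unique invariant distribution $\tilde\pi$. Marginalizing out $H$ then yields exactly $\pi(x|x\in\mathcal{S})$, which is the claim. The main obstacle is not the algebra, which is a one-line factorization of an $h$-independent indicator, but the convergence caveat: when clamping $\mathcal{S}$ carves out a lower-dimensional or possibly disconnected slice, one must be sure the restricted chain is still irreducible on $\mathcal{S}$ (so that the initial condition $x_0\in\mathcal{S}$ does not trap it in one component). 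I would dispatch this exactly as in Section~\ref{sec:a-note-about-consistency-from-Schweitzer}: state it as a mild hypothesis, and point to Theorem~\ref{thm:schweitzer_inequality} for the robustness of the resulting stationary distribution when $g(x|h)$ is only an approximate, finite-capacity minimizer rather than the exact conditional.
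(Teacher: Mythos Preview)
Your proof is correct and matches the paper's reasoning. The paper does not attach a formal proof block to this proposition; the argument is given informally in the surrounding text of Section~\ref{sec:missing_inputs}, where it is observed that mutual compatibility of $f$ and $g$ makes the unclamped chain a Gibbs sampler for a single joint $\pi(x,h)$, and hence that alternating $f(h|x)$ with $g(x|h,x\in\mathcal{S})$ is just Gibbs sampling for the conditioned joint $\pi(x,h\mid x\in\mathcal{S})$ --- exactly your restricted-joint construction $\tilde\pi$ and conditional-matching check.

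The one place the paper does something slightly different is in the adjacent Proposition~\ref{prop:clamp_sufficient}, which generalizes to encoders $f$ that need not equal $\pi(h|x)$: there the proof proceeds by plugging $\pi(x,h\mid x\in\mathcal{S})$ into the stationarity equation of the clamped chain and invoking uniqueness by ergodicity, rather than by recognizing the chain as a Gibbs sampler. Your route is the cleaner one in the mutually compatible case, and your observation that the indicator $\mathbb{I}(x\in\mathcal{S})$ is $h$-independent (so $\tilde\pi(h|x)=f(h|x)$ on $\mathcal{S}$) is precisely what makes the sufficient condition~(\ref{th3_cond}) of Proposition~\ref{prop:clamp_sufficient} hold automatically. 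Your handling of the ergodicity caveat and the appeal to Theorem~\ref{thm:schweitzer_inequality} for robustness under approximate $g$ also track the paper's discussion exactly.
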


\svs{2}
\subsection{General conditions for clamping inputs}
\label{sec:missing_inputs_condition}
\svs{2}

In the previous section we gave a \emph{sufficient} condition for ``clamping $\mathcal{S}$''
to work in the context of a Markov chain based on an encoder
distribution with density $f(h|x)$ and a decoder distribution with density $g(x|h)$.

In this section, we will give a \emph{sufficient} and \emph{necessary} condition
on the sufficient and necessary conditions for handling
missing inputs by clamping observed inputs.
\begin{proposition}
\label{prop:clamp_sufficient}
Assume we have an \textit{ergodic} Markov chain with transition operators
having density $f(h | x)$ and $g(x | h)$.
Its unique stationary distribution is $\pi(x, h)$ over $\cal{X}\times\cal{H}$ which satisfies:
\begin{equation}
\nonumber \int_{\cal{X}\times\cal{H}}\pi(x, h)
				f(h' | x)g(x' | h') dxdh= \pi(x', h').
\end{equation}
Assume that we start from $(X_0, H_0) = (x_0, h_0)$ where $x_0 \in \cal{S}$,
$\cal{S} \subseteq \cal{X}$ ($\cal{S}$ can be considered as a constraint over $X$)
and we sample $(X_{t+1}, H_{t+1})$ by first sampling $H_{t+1}$ with 
encoder $f(H_{t+1}|X_{t})$ and then sampling $X_{t+1}$ with decoder $g(X_{t+1}|H_{t+1}, X_{t+1}\in \cal{S})$,
the new stationary distribution we reach is $\pi_{\cal{S}}(x, h)$.

Then a sufficient condition for
\[
\pi_{\cal{S}}(x) = \pi(x|x \in S)
\]
is for $\pi(x|x \in S)$ to satisfy
\begin{equation}
\label{th3_cond}
\int_{\cal{S}} \pi(x | x \in {\cal{S}})f(h'|x)dx = \pi(h'|x\in\cal{S})
\end{equation}

where $\pi(x | x \in {\cal{S}})$ and $\pi(h'|x\in\cal{S})$ are conditional distributions 
\begin{equation}
\nonumber \pi(x | x \in {\cal{S}}) =\frac{\pi(x)}{\int_{\cal{S}}\pi(x')dx'},\quad
\pi(h' | x \in {\cal{S}}) =\frac{\int_{\cal{S}}\pi(x, h')dx}{\int_{\cal{S}\times{\cal{H}}}\pi(x, h)dxdh}.
\end{equation}

\end{proposition}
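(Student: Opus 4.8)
The plan is to guess the stationary distribution $\pi_{\mathcal{S}}$ of the clamped chain, check that it is left invariant by the clamped transition operator, and then appeal to uniqueness of the invariant distribution to conclude that the guess is correct; its $X$-marginal is then exactly what the proposition claims.

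The specific form of the unclamped balance equation is used only once, to extract a structural fact about $\pi$. Pulling $g(x'\mid h')$ out of the integral in $\int \pi(x,h)\,f(h'\mid x)\,g(x'\mid h')\,dx\,dh = \pi(x',h')$ and carrying out the remaining $(x,h)$-integral gives $\pi(x',h') = \rho(h')\,g(x'\mid h')$ for a function $\rho$ of $h'$ alone; integrating once more over $x'$ (using $\int g(x'\mid h')\,dx'=1$) identifies $\rho$ with the $H$-marginal $\pi_H$, so $\pi(x,h) = \pi_H(h)\,g(x\mid h)$ --- i.e. the decoder already reproduces the conditional of $X$ given $H$ under $\pi$. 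I would then take as candidate the restriction-and-renormalization $\tilde\pi_{\mathcal{S}}(x,h) := \pi(x,h)\,\mathbb{I}(x\in\mathcal{S})\,/\,\int_{\mathcal{S}}\pi(x')\,dx'$. Substituting $\pi(x,h)=\pi_H(h)g(x\mid h)$ and comparing with the definitions of $\pi(h'\mid x\in\mathcal{S})$ and $g(x\mid h, x\in\mathcal{S})$ given in the statement, a short computation rewrites this candidate in the ``decoder after one encoder step'' form $\tilde\pi_{\mathcal{S}}(x,h) = \pi(h\mid x\in\mathcal{S})\,g(x\mid h, x\in\mathcal{S})$, and by construction its $X$-marginal is $\pi(x\mid x\in\mathcal{S})$.

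To check invariance, note the clamped transition density out of a state $(x,h)$ with $x\in\mathcal{S}$ is $f(h'\mid x)\,g(x'\mid h', x'\in\mathcal{S})$, which does not involve the incoming $h$. Applying this operator to $\tilde\pi_{\mathcal{S}}$ and doing the $h$-integral first therefore collapses the marginal of $\tilde\pi_{\mathcal{S}}$, leaving $\bigl(\int_{\mathcal{S}}\pi(x\mid x\in\mathcal{S})\,f(h'\mid x)\,dx\bigr)\,g(x'\mid h', x'\in\mathcal{S})$. By hypothesis~(\ref{th3_cond}) the bracketed integral equals $\pi(h'\mid x\in\mathcal{S})$, and then $\pi(h'\mid x\in\mathcal{S})\,g(x'\mid h', x'\in\mathcal{S})$ is precisely the decoder form of $\tilde\pi_{\mathcal{S}}(x',h')$ obtained in the previous paragraph. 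Hence $\tilde\pi_{\mathcal{S}}$ is invariant, so it equals $\pi_{\mathcal{S}}$, and therefore $\pi_{\mathcal{S}}(x) = \pi(x\mid x\in\mathcal{S})$.

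I expect the real work to be bookkeeping rather than conceptual. One has to keep the normalizers $\int_{\mathcal{S}}g(x\mid h)\,dx$ and $\int_{\mathcal{S}}\pi(x')\,dx'$ under control and argue they are strictly positive on the relevant supports so that $g(x\mid h, x\in\mathcal{S})$ and the $\pi$-conditionals are well defined; one has to be careful sliding $\mathbb{I}(x\in\mathcal{S})$ in and out of integrals; and, for full rigour in the continuous-state case, one needs the clamped chain to inherit uniqueness of its invariant distribution (the finite-state, irreducible, homogeneous setting being exactly the one in which Theorem~\ref{thm:schweitzer_inequality} lives). Guessing the candidate $\tilde\pi_{\mathcal{S}}$ --- equivalently, recognizing that clamping should simply truncate-and-renormalize the \emph{joint} $\pi(x,h)$, not the $X$-marginal alone --- is the one step that takes insight; everything after it is mechanical.
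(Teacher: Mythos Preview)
Your proposal is correct and follows essentially the same route as the paper: guess the candidate $\pi(x,h\mid x\in\mathcal{S})$, verify it satisfies the clamped stationarity equation by first collapsing the $h$-integral and then invoking condition~(\ref{th3_cond}), and conclude by uniqueness. The only cosmetic difference is that the paper simply asserts $g(x\mid h)=\pi(x\mid h)$ ``by construction'' from how the chain is sliced, whereas you derive it algebraically from the unclamped balance equation; both arrive at the same key identity $g(x'\mid h', x'\in\mathcal{S})=\pi(x'\mid h', x'\in\mathcal{S})$ and use it at the same point.
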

\begin{proof}
Based on the assumption that the chain is ergodic, we have that
$\pi_{\cal{S}}(X, H)$ 
is the unique distribution satisfying
\begin{equation}
\label{th3}
\int_{\cal{S}\times\cal{H}}\pi_{\cal{S}}(x, h)f(h'|x) g(x'|h', x'\in{\cal{S}})dxdh = \pi_{\cal{S}}(x', h').
\end{equation}
Now let us check if $\pi(x, h|x \in \cal{S})$ satisfies the equation above.

The Markov chain described in the statement of the Theorem
is defined by looking at the slices $(X_t, H_t)$.
This means that, by construction, the conditional density $\pi(x|h)$
is just given by $g(x|h)$.

This relation still holds even if we put the $\cal{S}$ constraint on $x$ 
\begin{equation}
\nonumber g(x'|h', x'\in{\cal{S}}) = \pi(x'|h', x'\in\cal{S}).
\end{equation}
Now if we substitute $\pi_{\cal{S}}(x, h)$ by $\pi(x, h|x \in \cal{S})$ in Equation \ref{th3},
the left side of Equation \ref{th3} becomes 
\begin{eqnarray}
\nonumber 
&&\int_{\cal{S}\times {\cal{H}}} \pi(x, h|x \in {\cal{S}}) f(h'|x) \pi(x'|h', x'\in{\cal{S}})dxdh\\
\nonumber
&=&\pi(x'|h', x'\in{\cal{S}} )\int_{\cal{S}}(\int_{\cal{H}} \pi(x, h|x \in {\cal{S}})dh) f(h'|x)dx\\
\nonumber
&=&\pi(x'|h', x'\in{\cal{S}} )\int_{\cal{S}}\pi( x|x \in {\cal{S}}) f(h'|x)dx\\
\nonumber
&=&\pi(x'|h', x'\in{\cal{S}} )\pi(h'| x \in {\cal{S}}) \hspace{2em} \textrm{(using Equation \ref{th3_cond})}  \\
\nonumber
&=&\pi(x'|h', x'\in{\cal{S}} )\pi(h'| x' \in {\cal{S}})\\
\nonumber &=&\pi(x',h'| x'\in{\cal{S}} ).
\end{eqnarray}
This shows that $\pi(x, h|x\in{\cal{S}})$ satisfies Equation \ref{th3}.
Due to the ergodicity of the chain, the distribution $\pi_{\cal{S}}(x,h)$
that satisfies Equation \ref{th3} is unique,
so we have $\pi_{\cal{S}}(x,h) =  \pi(x, h|x\in{\cal{S}})$.
By marginalizing over $h$ we get
\begin{equation}
\nonumber \pi_{\cal{S}}(x) =  \pi(x|x\in{\cal{S}}).
\end{equation}
\end{proof}
Proposition \ref{prop:clamp_sufficient} gives a sufficient condition
for dealing missing inputs by clamping observed inputs.
Note that this condition is weaker than the \textit{mutually compatible} condition
discussed in Section \ref{sec:missing_inputs}.
Furthermore, under certain circumstances, this sufficient condition becomes necessary,
and we have the following proposition :
\begin{proposition}
\label{prop:clamp_necessary}
Assume that the Markov chain in Proposition \ref{prop:clamp_sufficient} has finite discrete state space for both $X$ and $H$.
The condition in Equation \ref{th3_cond} in Proposition \ref{prop:clamp_sufficient} becomes a necessary condition when
all discrete conditional distributions $g(x|h, x \in \cal{S})$ are linear independent.
\end{proposition}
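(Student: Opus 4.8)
The plan is to run the argument of Proposition~\ref{prop:clamp_sufficient} in reverse: we assume $\pi_{\mathcal{S}}(x) = \pi(x \mid x \in \mathcal{S})$ and want to recover Equation~\ref{th3_cond}. The key intermediate target is the equality of \emph{hidden} marginals, $\pi_{\mathcal{S}}(h) = \pi(h \mid x \in \mathcal{S})$; once that is in hand, Equation~\ref{th3_cond} drops out of the stationarity equation of the clamped chain. The finite-state and linear-independence hypotheses are needed only to get from the equality of visible marginals to the equality of hidden marginals.

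First I would record the structural fact (already used inside the proof of Proposition~\ref{prop:clamp_sufficient}) that, because the clamped chain produces $X_{t+1}$ from $H_{t+1}$ via $g(x \mid h, x \in \mathcal{S})$, its stationary joint has conditional $\pi_{\mathcal{S}}(x \mid h) = g(x \mid h, x \in \mathcal{S})$, and the same holds for the restricted original stationary, $\pi(x \mid h, x \in \mathcal{S}) = g(x \mid h, x \in \mathcal{S})$. Thus both distributions are mixtures of the \emph{same} components $g(\cdot \mid h, x \in \mathcal{S})$ over $\mathcal{S}$:
\[
\pi_{\mathcal{S}}(x) = \sum_{h} \pi_{\mathcal{S}}(h)\, g(x \mid h, x \in \mathcal{S}), \qquad \pi(x \mid x \in \mathcal{S}) = \sum_{h} \pi(h \mid x \in \mathcal{S})\, g(x \mid h, x \in \mathcal{S}),
\]
for every $x \in \mathcal{S}$. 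Equating the two left-hand sides by hypothesis and subtracting, we obtain $\sum_{h} \bigl[ \pi_{\mathcal{S}}(h) - \pi(h \mid x \in \mathcal{S}) \bigr] g(x \mid h, x \in \mathcal{S}) = 0$ for all $x \in \mathcal{S}$, i.e.\ a linear relation among the finitely many vectors $\bigl( g(x \mid h, x \in \mathcal{S}) \bigr)_{x \in \mathcal{S}}$, $h \in \mathcal{H}$, with coefficients $\pi_{\mathcal{S}}(h) - \pi(h \mid x \in \mathcal{S})$. By the assumed linear independence of these conditional distributions all coefficients vanish, giving $\pi_{\mathcal{S}}(h) = \pi(h \mid x \in \mathcal{S})$ for every $h$.

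To finish, I would marginalize the clamped-chain stationarity equation~\ref{th3} over $x'$: since $\sum_{x' \in \mathcal{S}} g(x' \mid h', x' \in \mathcal{S}) = 1$, it collapses to $\sum_{x \in \mathcal{S}} \pi_{\mathcal{S}}(x)\, f(h' \mid x) = \pi_{\mathcal{S}}(h')$ for all $h'$. Substituting $\pi_{\mathcal{S}}(x) = \pi(x \mid x \in \mathcal{S})$ (the hypothesis) on the left and $\pi_{\mathcal{S}}(h') = \pi(h' \mid x \in \mathcal{S})$ (just established) on the right yields exactly Equation~\ref{th3_cond}, proving necessity. The hard part is the middle step: in general the map sending a hidden marginal to its induced visible marginal is many-to-one, so equality of visible marginals carries no information about the hidden marginals; it is precisely linear independence of the ``decoder columns'' $g(\cdot \mid h, x \in \mathcal{S})$ (which in particular forces $|\mathcal{H}| \le |\mathcal{S}|$) that makes this map injective. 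A minor point to handle cleanly is that $g(\cdot \mid h, x \in \mathcal{S})$ must be well defined for each $h$, i.e.\ $\sum_{x \in \mathcal{S}} g(x \mid h) > 0$; this is ensured by the non-degeneracy assumptions on $g$ used elsewhere in the paper.
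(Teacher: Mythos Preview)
Your proposal is correct and follows essentially the same route as the paper. Both arguments hinge on the identical linear system $\mathbf{G}\mathbf{z}=\mathbf{P}_x$ with columns $g(\cdot\mid h,x\in\mathcal{S})$, and both use the assumed linear independence of those columns to force uniqueness of the solution. The only cosmetic difference is the order of steps: the paper first writes down the marginalized stationarity equation, defines $F(h')=\sum_{x\in\mathcal{S}}\pi(x\mid x\in\mathcal{S})f(h'\mid x)$ directly as one solution of $\mathbf{G}\mathbf{z}=\mathbf{P}_x$, observes that $\pi(h'\mid x\in\mathcal{S})$ is another, and concludes $F=\mathbf{P}_h$; you instead first identify the two solutions as the hidden marginals $\pi_{\mathcal{S}}(h)$ and $\pi(h\mid x\in\mathcal{S})$, conclude they coincide, and \emph{then} invoke the marginalized stationarity to rewrite $\pi_{\mathcal{S}}(h')$ as $\sum_x \pi(x\mid x\in\mathcal{S})f(h'\mid x)$. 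Since $\pi_{\mathcal{S}}(h')$ equals the paper's $F(h')$ once the hypothesis $\pi_{\mathcal{S}}(x)=\pi(x\mid x\in\mathcal{S})$ is used, the two arguments are the same computation with the bookkeeping reordered.
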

\begin{proof}
We follow the same notions in Proposition \ref{prop:clamp_sufficient} and now we have $\pi_{\cal{S}}(x) = \pi(x|x \in S)$.
Because $\pi_{\cal{S}}(x)$ is the marginal of the stationary distribution
reached by alternatively sampling with encoder $f(H|X)$ and decoder $g(X|H, X \in {\cal{S}})$, 
we have that $\pi(x|x \in {\cal{S}})$ satisfies
\begin{equation}
\nonumber \int_{\cal{S}}\pi(x| x\in {\cal{S}})(\int_{\cal{H}}f(h'|x) \pi(x'|h', x'\in{\cal{S}})dh')dx = \pi(x'|x'\in \cal{S})
\end{equation}
which is a direct conclusion from Equation \ref{th3} when considering the fact that
$\pi_{\cal{S}}(x) = \pi(x|x \in S)$ and $g(x'|h', x'\in{\cal{S}}) = \pi(x'|h', x'\in{\cal{S}})$.
If we re-arrange the integral of equation above, we get:
\begin{equation}
\label{th3_c1_2}
\int_{\cal{H}} \pi(x'|h', x'\in{\cal{S}})(\int_{\cal{S}}\pi(x| x\in {\cal{S}})f(h'|x)dx)dh' = \pi(x'|x'\in \cal{S}).
\end{equation}
Note that $\int_{\cal{S}}\pi(x|x\in{\cal{S}})f(h'|x)dx$ is the same as the left side
of Equation \ref{th3_cond} in Proposition \ref{prop:clamp_sufficient} and it can be seen as some function  $F(h')$ satisfying
$\int_{\cal{H}}F(h')dh' = 1$. Because we have considered a GSN over a finite discrete state space
${\cal{X}} = \{x_1, \cdots, x_N\}$ and ${\cal{H}} = \{h_1, \cdots, h_M\}$,
the integral in Equation \ref{th3_c1_2} becomes the linear matrix equation
\begin{equation}
\nonumber \mathbf{G}\cdot\mathbf{F}=\mathbf{P}_x,
\end{equation}
where $\mathbf{G}(i,j) = g(x'_i|h'_j, x'\in{\cal{S}}) = \pi(x'_i|h'_j, x'\in{\cal{S}})$, $\mathbf{F}(i) = F(h'_i)$
and $\mathbf{P}_x(i) =\pi(x_i'|x'\in{\cal{S}}) $.
In other word, $\mathbf{F}$ is a solution of the linear matrix equation
\begin{equation}
\nonumber \mathbf{G}\cdot\mathbf{Z}=\mathbf{P}_x.
\end{equation}
From the definition of $\mathbf{G}$ and $\mathbf{P}_x$, it is obvious that 
$\mathbf{P}_h$ is also a solution of this linear matrix equation, if $\mathbf{P}_h(i) = \pi(h_i'|x'\in{\cal{S}}) $.
Because all discrete conditional distributions $g(x|h, x\in \cal{S})$ are linear independent,
which means that all the column vectors of $\mathbf{G}$ are linear independent, then this
linear matrix equation has no more than one solution. Since $\mathbf{P}_h$ is the solution,
we have $\mathbf{F} = \mathbf{P}_h$, equivalently in integral form
\begin{equation}
\nonumber F(h') = \int_{\cal{S}} \pi(x | x \in {\cal{S}})f(h'|x)dx = \pi(h'|x\in\cal{S})
\end{equation}
which is the condition Equation \ref{th3_cond} in Proposition \ref{prop:clamp_sufficient}.

\end{proof}
Proposition \ref{prop:clamp_necessary} says that at least in discrete finite state space,
if the $g(x|h, x\in \cal{S})$ satisfies some reasonable condition like linear independence, then
along with Proposition \ref{prop:clamp_sufficient}, the condition in Equation \ref{th3_cond} is 
the necessary and sufficient condition for handling missing inputs by clamping the observed
part for at least one subset $\cal{S}$. If we want this result to hold
for any subset $\cal{S}$, we have the following proposition:
\begin{proposition}
\label{prop:clamp_any}
If the condition in Equation \ref{th3_cond} in Proposition \ref{prop:clamp_sufficient} holds for any subset of $\cal{S}$ that $\cal{S}\subseteq \cal{X}$,
then we have 
\begin{equation}
\nonumber f(h'|x) = \pi(h'|x)
\end{equation}
In other words, $f(h|x)$ and $g(x|h)$ are two conditional distributions 
marginalized from a single joint distribution $\pi(x, h)$.
\end{proposition}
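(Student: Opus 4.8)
The plan is to unfold the definitions appearing in Equation~\ref{th3_cond} until the hypothesis becomes the statement that two measures on $\mathcal{X}$ agree on \emph{every} measurable subset $\mathcal{S}$, and then invoke the elementary fact that this forces their densities to coincide.

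First I would substitute $\pi(x\mid x\in\mathcal{S}) = \pi(x)\big/\int_{\mathcal{S}}\pi(x')\,dx'$ and $\pi(h'\mid x\in\mathcal{S}) = \int_{\mathcal{S}}\pi(x,h')\,dx \big/ \int_{\mathcal{S}}\pi(x')\,dx'$ into Equation~\ref{th3_cond}. The common normalizer $\int_{\mathcal{S}}\pi(x')\,dx'$ appears on both sides and cancels, so the hypothesis is equivalent to
\[
\int_{\mathcal{S}} \pi(x)\, f(h'\mid x)\, dx \;=\; \int_{\mathcal{S}} \pi(x,h')\, dx \qquad \textrm{for every } h' \textrm{ and every } \mathcal{S}\subseteq\mathcal{X}.
\]
Writing $\pi(x,h') = \pi(x)\,\pi(h'\mid x)$ turns this into $\int_{\mathcal{S}} \pi(x)\bigl[f(h'\mid x) - \pi(h'\mid x)\bigr]\,dx = 0$ for all $\mathcal{S}$.

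Next I would conclude pointwise equality. In the discrete case this is immediate: taking $\mathcal{S}=\{x\}$ gives $\pi(x)\bigl[f(h'\mid x)-\pi(h'\mid x)\bigr]=0$, so $f(h'\mid x)=\pi(h'\mid x)$ wherever $\pi(x)>0$. In the continuous case, a function whose integral over every measurable set vanishes must be zero almost everywhere, which again yields $\pi(x)f(h'\mid x) = \pi(x)\pi(h'\mid x)$ for ($\pi$-almost) every $x$, i.e. $f(h'\mid x)=\pi(h'\mid x)$ on the support of $\pi$. Finally, recall from the construction of the chain (exactly as used in the proof of Proposition~\ref{prop:clamp_sufficient}, where $g(x'\mid h', x'\in\mathcal{S}) = \pi(x'\mid h', x'\in\mathcal{S})$) that $g(x\mid h)$ is precisely the conditional $\pi(x\mid h)$ of the stationary joint $\pi(x,h)$. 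Hence $f(h\mid x)=\pi(h\mid x)$ and $g(x\mid h)=\pi(x\mid h)$ are both obtained by conditioning the single joint $\pi(x,h)$, which is the assertion.

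The step requiring the most care is the passage, in the continuous setting, from "the equality of integrals holds for every $\mathcal{S}$" to "the densities coincide," together with bookkeeping about where $\pi$ may vanish: the conclusion $f(h'\mid x)=\pi(h'\mid x)$ can strictly only be asserted on $\{x : \pi(x)>0\}$, which is all of $\mathcal{X}$ in the regime of Section~\ref{sec:missing_inputs} where $f(h\mid x)>0$ everywhere forces the stationary distribution to have full support. The remaining manipulations are routine substitutions of the definitions.
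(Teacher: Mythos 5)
Your proof is correct and follows essentially the same route as the paper: the paper simply takes $\mathcal{S}=\{x_0\}$ in Equation~\ref{th3_cond}, which collapses to $1\cdot f(h'|x=x_0)=\pi(h'|x=x_0)$ for arbitrary $x_0$, and then invokes Proposition~\ref{prop:clamp_sufficient} for $g(x|h)=\pi(x|h)$ to conclude mutual compatibility. Your extra care in the continuous case (replacing the singleton argument, which is degenerate when single points have $\pi$-measure zero, by the standard ``integral vanishes over every set implies the integrand is zero almost everywhere'' argument, and restricting the conclusion to the support of $\pi$) is a refinement of the same localization idea rather than a different method.
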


\begin{proof}
Because $\cal{S}$ can be any subset of $\cal{X}$, of course that $\cal{S}$
can be a set which only has one element $x_0$, i.e., ${\cal{S}} = \{x_0\}$.
Now the condition in Equation \ref{th3_cond} in Proposition \ref{prop:clamp_sufficient} becomes 
\begin{equation}
\nonumber 1 \cdot f(h'|x=x_0) = \pi(h'|x=x_0).
\end{equation}
Because $x_0$ can be an arbitrary element in $\cal{X}$, we have
\begin{equation}
\nonumber f(h'|x) = \pi(h'|x), \quad {\rm or} \quad f(h|x) = \pi(h|x). 
\end{equation}
Since from Proposition \ref{prop:clamp_sufficient} we already know
that $g(x|h)$ is $\pi(x|h)$, we have that
$f(h|x)$ and $g(x|h)$ are \textit{mutually compatible}, that is, 
they are two conditional distributions 
obtained by normalization from a single joint distribution $\pi(x, h)$.
\end{proof}
According to Proposition \ref{prop:clamp_any},
if condition in Equation \ref{th3_cond} holds for any subset $\cal{S}$,
then $f(h|x)$ and $g(x|h)$ must be \textit{mutually compatible} to the single joint
distribution $\pi(x,h)$. 

\subsection{Dependency Networks as GSNs}
\label{sec:dependency-nets}
\svs{2}
Dependency networks~ \mbox{\citep{HeckermanD2000}} are models in which one
estimates conditionals $P_i(x_i | x_{-i})$, where $x_{-i}$ denotes $x \setminus x_i$,
i.e., the set of variables other than the $i$-th one, $x_i$. Note that
each $P_i$ may be parametrized separately, thus not guaranteeing that
there exists a joint of which they are the conditionals. Instead of the
ordered pseudo-Gibbs sampler defined in~ \mbox{\citet{HeckermanD2000}}, which
resamples each variable $x_i$ in the order $x_1, x_2, \ldots$, we can view
dependency networks in the GSN framework by defining a proper Markov chain
in which at each step one randomly chooses which variable to resample. The
corruption process therefore just consists of $H=f(X,Z)=X_{-s}$ 
where $X_{-s}$ is the complement of $X_{s}$, with $s$
a randomly chosen subset of elements of $X$ (possibly constrained to be
of size 1).  Furthermore, we parametrize the reconstruction
distribution as $P_{\theta_2}(X=x|H) = \delta_{x_{-s}=X_{-s}}P_{\theta_2,s}(X_s=x_s | x_{-s})$ 
where the estimated conditionals
$P_{\theta_2,s}(X_s=x_s | x_{-s})$ are not constrained to be consistent
conditionals of some joint distribution over all of $X$.

\begin{proposition}
If the above GSN Markov chain has a stationary distribution, then
the dependency network defines a joint distribution (which is that
stationary distribution), which does not have to be known in closed
form. Furthermore, if the conditionals $P(X_s | X_{-s})$ are consistent estimators
of the ground truth conditionals, then that stationary distribution 
is a consistent estimator of the ground truth joint distribution.
\vs{1}
\end{proposition}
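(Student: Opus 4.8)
The plan is to handle the two assertions in turn. For the first, I would note that the GSN associated with a dependency network is an honest Markov chain on the visible configuration space $\mathcal{X}$: from a state $x$ the transition operator draws a subset $s$ from the fixed scan distribution, keeps $x_{-s}$ fixed, and redraws $x_s$ from the given conditional $P_{\theta_2,s}(X_s \mid x_{-s})$ (the latent $H = X_{-s}$ being a deterministic function of $x$ and the chosen $s$, hence marginalizable out of the state). By hypothesis this chain has a stationary distribution $\pi$, and any stationary distribution of a chain on $\mathcal{X}$ is, by definition, a joint distribution over all the visible variables; this is the joint \emph{defined} by the dependency network. It need not be available in closed form, since it is characterized only implicitly as a fixed point of the transition operator (a Perron eigenvector of the transition matrix in the finite case), and in general the conditionals of $\pi$ need not coincide with the possibly-inconsistent $P_{\theta_2,s}$ from which the chain is built.

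For the second assertion I would first treat the exact case and then perturb. Suppose the conditionals used are the ground-truth conditionals $\calP(X_s \mid X_{-s})$. Then each elementary step --- fix $X_{-s}$, redraw $X_s$ from $\calP(X_s \mid X_{-s})$ --- leaves $\calP(X)$ invariant, and hence so does the random mixture over $s$; this is the standard random-scan Gibbs sampler. (Equivalently, with the true conditionals the reconstruction $P_{\theta_2}(X \mid H) = \delta_{x_{-s} = X_{-s}}\,\calP(X_s \mid X_{-s})$ is exactly the Bayes-optimal denoiser for the fixed ``drop a random subset'' corruption $\mathcal{C}(H \mid X)$, so Proposition \ref{prop:markov-chain-basic-gsn} applies directly.) If in addition the chain is irreducible --- which holds, e.g., when singletons $s$ are chosen with positive probability and the true conditionals have full support --- then $\calP(X)$ is its \emph{unique} stationary distribution, so $\pi = \calP(X)$ exactly in this case.

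Now bring in consistency. A consistent estimator of the conditionals gives a sequence $P^{(n)}_{\theta_2,s}(X_s \mid X_{-s}) \to \calP(X_s \mid X_{-s})$ as the sample size $n \to \infty$, so the corresponding transition operators $K^{(n)}$ converge to the true random-scan Gibbs kernel $K^{\infty}$, whose unique stationary distribution is $\calP(X)$. Theorem \ref{thm:schweitzer_inequality} then yields, in the finite-state case (and, as elsewhere in this paper, as a close approximation of the continuous case), $\| \pi^{(n)} - \calP(X) \|_1 \le \| Z \|_\infty\, \| K^{(n)} - K^{\infty} \|_\infty \to 0$, so the stationary distribution $\pi^{(n)}$ of the estimated chain converges to $\calP(X)$; i.e., it is a consistent estimator of the ground-truth joint. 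The step I expect to be the main obstacle is making this perturbation argument airtight: it requires the limiting chain $K^{\infty}$ to be irreducible with a finite fundamental-matrix norm $\| Z \|_\infty$ (so the Schweitzer bound is non-vacuous) and a genuine extension to continuous $\mathcal{X}$ rather than the finite-state approximation invoked here; a secondary subtlety, should one prefer the Proposition \ref{prop:markov-chain-basic-gsn} route for the exact case, is confirming that $H$ records which coordinates were kept, so that the assumed product form really is the optimal denoiser for a genuinely $\theta$-independent corruption.
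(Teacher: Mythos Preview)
Your proposal is correct and follows essentially the same route as the paper: the paper's entire proof is a single sentence invoking Proposition~\ref{prop:markov-chain-basic-gsn} for this particular corruption/reconstruction pair, which is exactly your ``Bayes-optimal denoiser'' observation for the drop-a-random-subset corruption. Your treatment is in fact more careful than the paper's one-liner, since you make explicit the perturbation step (via Theorem~\ref{thm:schweitzer_inequality}) needed to pass from exact conditionals to merely consistent estimators, and you correctly flag the regularity hypotheses (irreducibility of the limiting chain, finiteness of $\|Z\|_\infty$, the continuous-state caveat) that this requires; the paper leaves all of that implicit.
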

The proposition can be proven by immediate application of Proposition~\ref{prop:markov-chain-basic-gsn}
with the above particular GSN model definitions.

This joint stationary
distribution can exist even if the conditionals are not consistent.
To show that, assume that some choice of (possibly inconsistent)
conditionals gives rise to a stationary distribution $\pi$.
Now let us consider the set of all conditionals (not necessarily
consistent) that could have given rise to that $\pi$.
Clearly, the conditionals derived from $\pi$ by Bayes rule are part of that
set, but there are infinitely many others (a simple counting
argument shows that the fixed point equation of $\pi$ introduces
fewer constraints than the number of degrees of freedom that
define the conditionals). To better understand why the ordered
pseudo-Gibbs chain does not benefit from the same properties, we can consider an extended case
by adding an extra component of the state $X$, being
the index of the next variable to
resample. In that case, the Markov chain associated with the ordered pseudo-Gibbs procedure
would be periodic, thus violating the ergodicity assumption
of the theorem. However, by introducing randomness in the choice
of which variable(s) to resample next, we obtain aperiodicity
and ergodicity, yielding as stationary distribution
a mixture over all possible resampling orders. These results also
show in a novel way (see e.g. ~\citet{Hyvarinen-2006-small} for earlier results)
that training by pseudolikelihood or generalized pseudolikelihood
provides a consistent estimator of the associated joint, so long as the GSN
Markov chain defined above is ergodic. This result can
be applied to show that the multi-prediction deep Boltzmann machine (MP-DBM)
training procedure introduced by~\citet{Goodfellow-et-al-NIPS2013} also
corresponds to a GSN. This has been exploited in order to obtain
much better samples using the associated GSN Markov chain than
by sampling from the corresponding DBM~\citep{Goodfellow-et-al-NIPS2013}.
Another interesting conclusion that one can draw from that paper
and its GSN interpretation is that state-of-the-art classification
error can thereby be obtained: 0.91\% on MNIST without fine-tuning (best 
comparable previous DBM results was well above 1\%) and 10.6\% on permutation-invariant
NORB (best previous DBM results was 10.8\%).

\svs{2}
\section{Experimental results}
\label{sec:experiment}
\svs{2}

The theoretical results on Generative Stochastic Networks (GSNs)
open for exploration a large class of possible
parametrizations and training procedures which share the property that they can capture the
underlying data distribution through the GSN Markov chain. What
parametrizations will work well?  Where and how should one inject noise to best balance fast mixing with making the implied conditional easy to model? We present
results of preliminary experiments with specific selections for each of these choices, but
the reader should keep in mind that the space of possibilities is vast.

We start in Section \ref{sec:walkback_experiment} with results involving
GSNs without latent variables (denoising auto-encoders in Section \ref{sec:DAE-model-prob-density} and
the walkback algorithm presented in Section \ref{sec:walkback-description}).
Then in Section \ref{sec:gsn_experiment} we proceed with experiments
related to GSNs with latent variables (model described in Section \ref{sec:from-DAE-to-GSN}). 
Section \ref{sec:swb_experiment} extends experiments of the walkback algorithm 
with the scaling factors discussed in 
Section \ref{sec:swb}. 
A Theano\footnote{http://deeplearning.net/software/theano/}
 \citep{bergstra+al:2010-scipy-short} 
implementation
is available\footnote{https://github.com/yaoli/GSN}, including the links of datasets.
\svs{2}
\subsection{Experimental results regarding walkback in DAEs}
\label{sec:walkback_experiment}
\svs{2}

We present here an experiment performed with a non-parametric estimator on two types of data
and an experiment done with a parametric neural network on the MNIST dataset.

{\bf Non-parametric case.}
The mathematical results presented here
apply to any denoising training criterion where the reconstruction
loss can be interpreted as a negative log-likelihood. This
remains true whether or not the denoising machine $P(X|\tilde{X})$
is parametrized as the composition of an encoder and decoder.
This is also true of the asymptotic estimation results
in~\citet{Alain+Bengio-ICLR2013}.
We experimentally validate the above theorems in a case where the asymptotic limit
(of enough data and enough capacity) can be reached, i.e., in a
low-dimensional non-parametric setting.
Fig.~\ref{fig:histogram} shows the distribution
recovered by the Markov chain for {\bf discrete data}
with only 10 different values.  The conditional
$P(X|\tilde{X})$ was estimated by multinomial models and maximum likelihood (counting) from 5000
training examples. 5000 samples were generated from the chain to
estimate the asymptotic distribution $\pi_n(X)$.  For 
{\bf continuous data}, Figure~\ref{fig:histogram} also
shows the result of 5000 generated samples and 500 original training examples
with $X \in \mathbb{R}^{10}$, with scatter plots of pairs of
dimensions. The estimator is also non-parametric (Parzen density estimator
of $P(X|\tilde{X})$). 
\begin{figure}[htb]
\centering
\vs{8}
\includegraphics[width=0.40\textwidth]{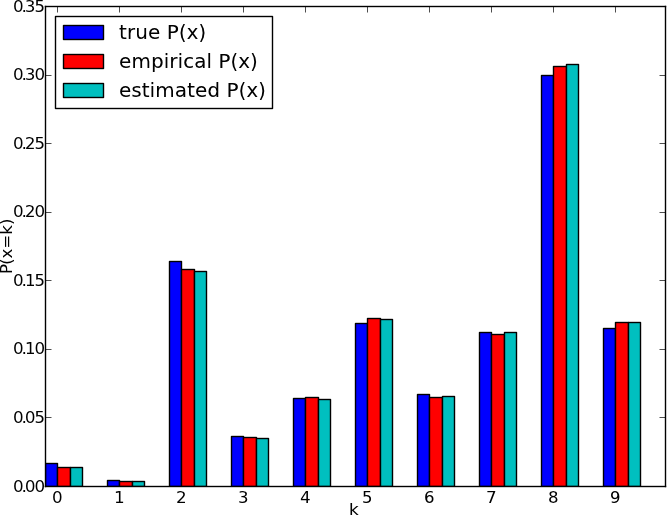}
\includegraphics[width=0.40\textwidth]{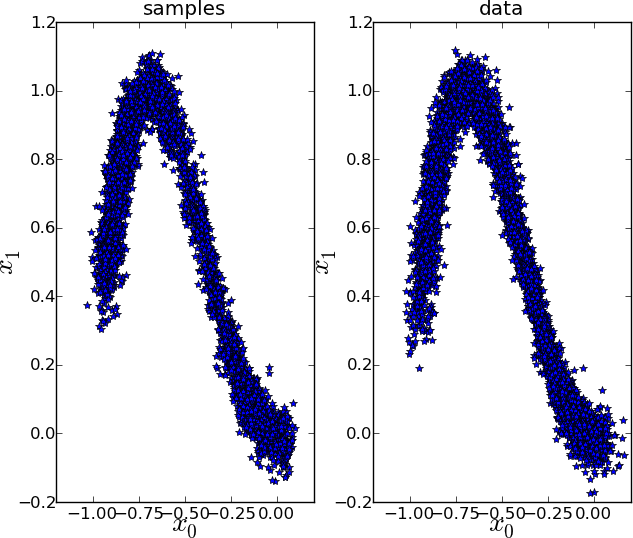}
\vs{2}
\includegraphics[width=0.40\textwidth]{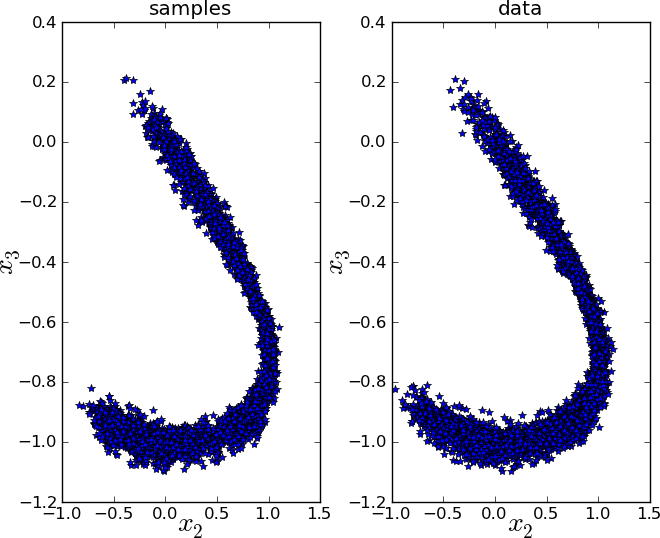}
\includegraphics[width=0.40\textwidth]{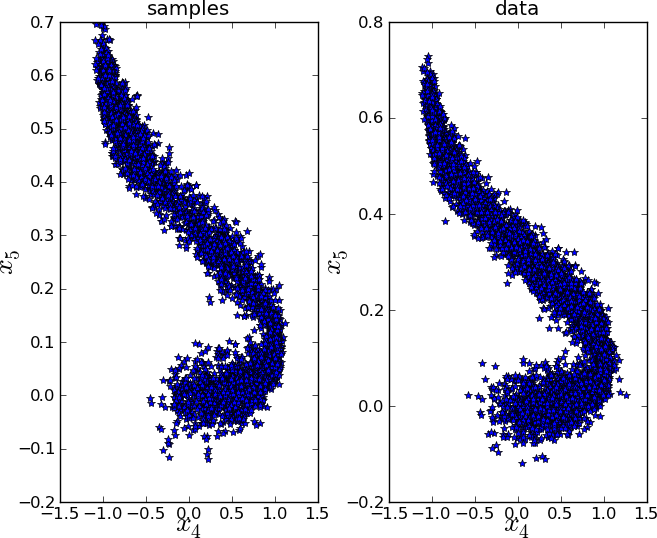}
\vs{1}
\caption{\sl Top left: histogram of a data-generating distribution (true, blue),
the empirical distribution (red), and the estimated distribution using
a denoising maximum likelihood estimator. Other figures: pairs of variables
(out of 10) showing the training samples and the model-generated samples.}
\label{fig:histogram}
\vs{4}
\end{figure}

{\bf MNIST digits.}  We trained a DAE on the
binarized MNIST data (thresholding at 0.5). 
The 784-2000-784 auto-encoder is trained for 200 epochs with the 50000 training examples and salt-and-pepper noise
(probability 0.5 of corrupting each bit, setting it to 1 or 0 with
probability 0.5). It has 2000 tanh hidden units and is trained by minimizing cross-entropy loss,
i.e., maximum likelihood on a factorized Bernoulli reconstruction distribution.
With walkback training, a chain of 5
steps was used to generate 5 corrupted examples for each training
example. Figure~\ref{fig:MNIST} shows samples generated with and without walkback.
The quality of the samples was also estimated quantitatively by measuring
the log-likelihood of the test set under a non-parametric density
estimator $\hat{P}(x)={\rm mean}_{\tilde{X}} P(x|\tilde{X})$
constructed from 10,000 consecutively generated samples
($\tilde{X}$ from the Markov chain). The expected value of $\mathbb{E}[\hat{P}(x)]$
over the samples can be shown~\citep{Bengio+Yao-arxiv-2013} to be
a lower bound (i.e. conservative estimate) of the true (implicit) model density $P(x)$.
The test set log-likelihood bound
was not used to select among model architectures, but visual inspection of
samples generated did guide the preliminary search reported here.
Optimization hyper-parameters (learning rate, momentum, and
learning rate reduction schedule) were selected based on the 
training objective. We compare against a state-of-the-art RBM~\citep{NECO_cho_2013_enhanced}
with an AIS log-likelihood estimate of -64.1 (AIS estimates tend to be optimistic).
We also drew samples from the RBM and applied the same estimator (using the mean of the RBM's $P(x|h)$ with $h$
sampled from the Gibbs chain), and obtained a log-likelihood non-parametric bound of -233,
skipping 100 MCMC steps between samples (otherwise
numbers are very poor for the RBM, which mixes poorly).
The DAE log-likelihood bound
with and without walkback is respectively -116 and -142,
confirming visual inspection suggesting that
the walkback algorithm produces less spurious samples. However, the
RBM samples can be improved by a spatial blur. By tuning the amount of
blur (the spread of the Gaussian convolution), we obtained a bound of -112
for the RBM. Blurring did not help the auto-encoder. 

\begin{figure}[htb]
\centering
\vs{2}
\hspace*{-1mm}\includegraphics[width=0.5\textwidth]{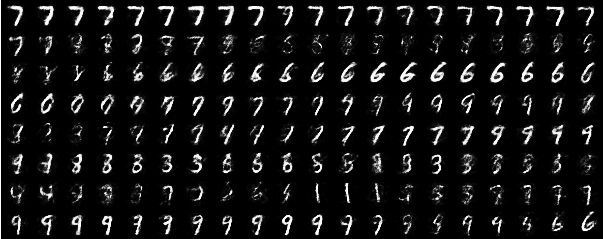} \hspace*{-1mm} \includegraphics[width=0.5\textwidth]{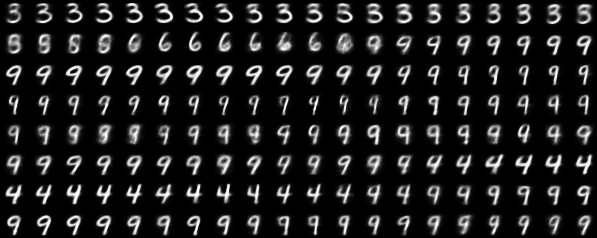}
\vs{5}
\caption{\sl Successive samples generated by Markov chain associated with
the trained DAEs
according to the plain sampling scheme (left) and walkback sampling scheme (right).
There are less ``spurious'' samples with the walkback algorithm.}
\label{fig:MNIST}
\vs{2}
\end{figure}

\svs{2}
\subsection{Experimental results for GSNs with latent variables}
\label{sec:gsn_experiment}
\svs{2}

We propose here to explore families of
parametrizations which are similar to existing deep stochastic architectures
such as the Deep Boltzmann Machine (DBM)~\citep{Salakhutdinov+Hinton-2009-small}.
Basically, the
idea is to construct a computational graph that is similar to the
computational graph for Gibbs sampling or variational inference
in Deep Boltzmann Machines. However, we have
to diverge a bit from these architectures in order to accommodate the
desirable property that it will be possible to back-propagate
the gradient of reconstruction log-likelihood with respect to the
parameters $\theta_1$ and $\theta_2$. Since the gradient of a binary
stochastic unit is 0 almost everywhere, we have to consider related
alternatives. An interesting source of inspiration regarding this
question is a recent paper on estimating or propagating gradients
through stochastic neurons~\citep{Bengio-arxiv2013}.
Here we consider the following stochastic non-linearities:
$h_i=\eta_{\rm out}+\tanh(\eta_{\rm in} + a_i)$
where $a_i$ is the linear activation for unit $i$ (an affine transformation
applied to the input of the unit, coming from the layer below, the layer
above, or both) and $\eta_{\rm in}$ and $\eta_{\rm out}$ are 
zero-mean Gaussian noises.

To emulate a sampling procedure similar to Boltzmann machines 
in which the filled-in missing values can depend on the
representations at the top level, the computational graph allows
information to propagate both upwards (from input to higher levels)
and downwards, giving rise to the
computational graph structure illustrated in Figure~\ref{fig:comp-graph},
which is similar to that explored for {\em deterministic} recurrent
auto-encoders~\citep{SeungS1998,Behnke-2001,Savard-master-small}. Downward
weight matrices have been fixed to the transpose of corresponding
upward weight matrices.

\begin{figure*}[ht]
\vs{3}
\centering
\begin{minipage}{0.35\textwidth}
\vspace*{-3mm}\hspace*{5mm}
\includegraphics[width=0.8\textwidth]{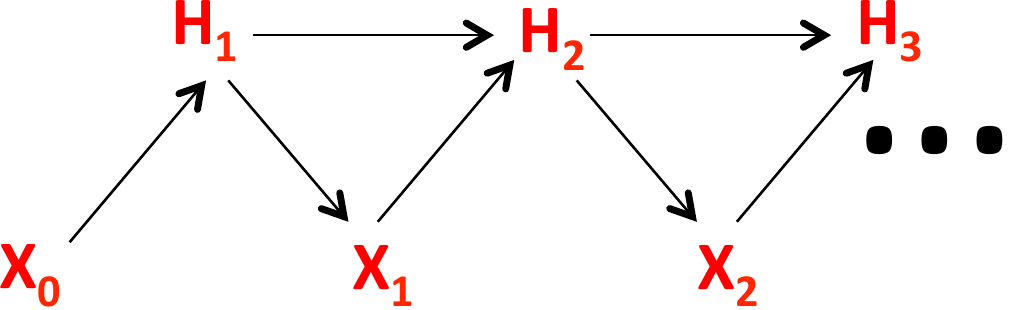} 
\end{minipage} 
\begin{minipage}{0.64\textwidth}
\includegraphics[width=0.99\textwidth]{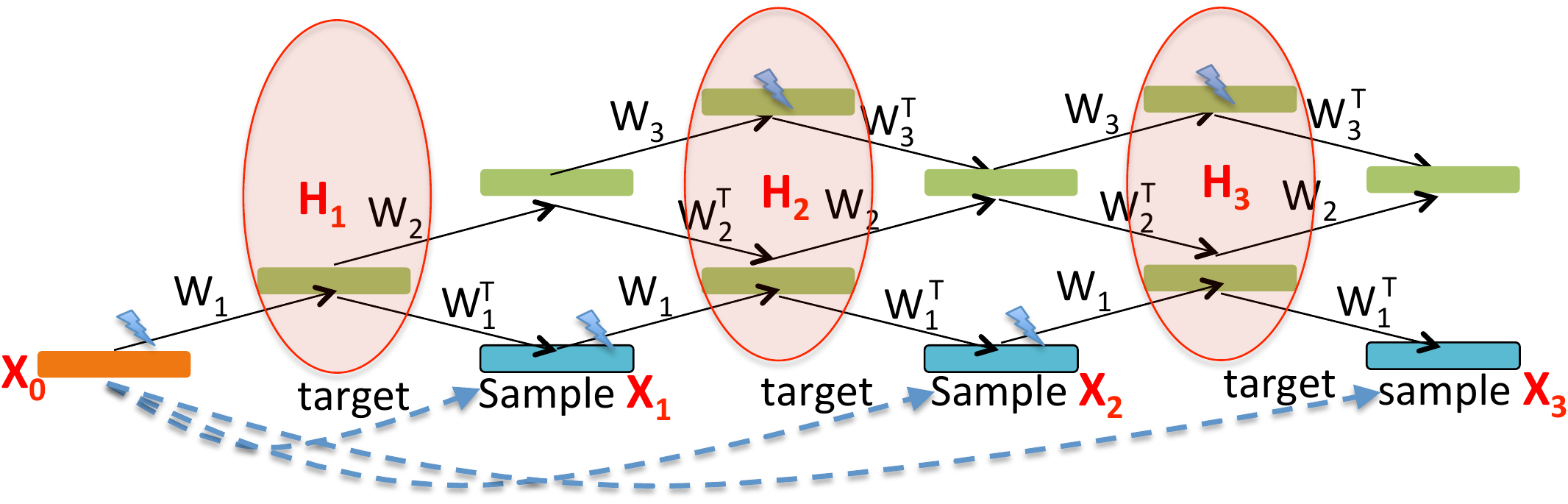}
\end{minipage}
\vs{3}
\caption{{\em Left:} Generic GSN Markov chain with state variables
$X_t$ and $H_t$. {\em Right:} GSN Markov chain inspired by the
unfolded computational graph of the Deep Boltzmann
Machine Gibbs sampling process, but with backprop-able stochastic units
at each layer. The training example $X=x_0$ starts the chain. Either
odd or even layers are stochastically updated at each step. All $x_t$'s
are corrupted by salt-and-pepper noise before entering the graph
(lightning symbol). Each $x_t$ for $t>0$ is obtained by sampling
from the reconstruction distribution for that step, $P_{\theta_2}(X_t|H_t)$. The walkback
training objective is the sum over all steps of log-likelihoods of target
$X=x_0$ under the reconstruction distribution.
In the special case of a unimodal Gaussian reconstruction
distribution, maximizing the likelihood is equivalent to minimizing
reconstruction error; in general one trains to maximum
likelihood, not simply minimum reconstruction error.}
\label{fig:comp-graph}
\vs{3}
\end{figure*}


With the {\em walkback} algorithm, a different reconstruction distribution
is obtained after each step of the short chain started at the training example $X$.
It means that the computational graph from $X$ to a reconstruction
probability at step $k$ actually involves generating intermediate samples as if we were
running the Markov chain starting at $X$. In the experiments, the
graph was unfolded so that $2 D$ sampled reconstructions would be produced,
where $D$ is the depth (number of hidden layers). The training loss is
the sum of the reconstruction negative log-likelihoods (of target $X$) 
over all $2 D$ reconstructions.


Experiments evaluating the ability of the GSN models to generate good samples
were performed on the MNIST dataset and the Toronto Face Database (TFD), following the setup 
in~\citet{Bengio-et-al-ICML2013}.

Theorem \ref{thm:noisy-reconstruction} requires $H_0$ to have the same distribution as $H_1$ (given $X_0$) during training,
and this may be achieved by initializing each training chain with
$H_0$ set to the previous value of $H_1$ when the same example $X_0$ was shown. However, 
it turned out that even with a dumb initialization of $H_0$, good results were obtained
in the experiments below. 

Networks with 2 and 3 hidden layers
were evaluated and compared to regular denoising auto-encoders. The latter has just 1
hidden layer and no state to state transition, i.e., the computational graph can be 
split into separate graphs
for each reconstruction step in the walkback algorithm. They all have $\tanh$
hidden units and pre- and post-activation Gaussian noise of standard
deviation 2, applied to all hidden layers except the first.
In addition, at each step in the
chain, the input (or the resampled $X_t$) is corrupted with salt-and-pepper
noise of 40\% (i.e., 40\% of the pixels are corrupted, and replaced with a
0 or a 1 with probability 0.5). Training is over 100 to 600 epochs at most, with
good results obtained after around 100 epochs, using stochastic gradient descent
(minibatch size of one example). Hidden layer sizes vary
between 1000 and 1500 depending on the experiments, and a learning rate of
0.25 and momentum of 0.5 were selected to approximately minimize the
reconstruction negative log-likelihood. The learning rate is reduced
multiplicatively by $0.99$ after each epoch.  Following~\citet{Breuleux+Bengio-2011},
the quality of the samples
was also estimated quantitatively by measuring the log-likelihood
of the test set under a Parzen density estimator constructed from
10,000 consecutively generated samples (using the real-valued mean-field reconstructions
as the training data for the Parzen density estimator). This can be
seen as a {\em lower bound on the true log-likelihood}, with the bound
converging to the true likelihood as we consider more samples and
appropriately set the smoothing parameter of the Parzen 
estimator.\footnote{However, in this paper, to be consistent with
the numbers given in \citet{Bengio-et-al-ICML2013} we used a Gaussian
Parzen density, which makes the numbers not comparable with the
AIS log-likelihood upper bounds for binarized images reported in other papers
for the same data.}


Results are summarized in Table~\ref{tab:LL}. As in Section~\ref{sec:walkback_experiment}, 
the test set Parzen log-likelihood bound
was not used to select among model architectures, but visual inspection of
generated samples guided this preliminary search.
Optimization hyper-parameters (learning rate, momentum, and
learning rate reduction schedule) were selected based on the 
reconstruction log-likelihood training objective. The Parzen log-likelihood bound obtained
with a two-layer model on MNIST is 214 ($\pm$ standard error of 1.1), while the log-likelihood
bound obtained by a single-layer model (regular denoising auto-encoder, DAE in
the table) is
substantially worse, at -152$\pm$2.2.

In comparison,~\citet{Bengio-et-al-ICML2013} report a log-likelihood bound of -244$\pm$54 for
RBMs and 138$\pm$2 for a 2-hidden layer DBN, using the same setup. We have
also evaluated a 3-hidden layer DBM~\citep{Salakhutdinov+Hinton-2009-small}, using
the weights provided by the author, and obtained a Parzen log-likelihood bound of 32$\pm$2.
See \url{http://www.utstat.toronto.edu/~rsalakhu/DBM.html} for details.

Interestingly, the GSN and the DBN-2 actually perform slightly better than
when using samples directly coming from the MNIST training set, perhaps because
the mean-field outputs we use are more ``prototypical'' samples.

Figure~\ref{fig:samples+inpainting} shows two runs of consecutive samples
from this trained model, illustrating that it mixes quite well (faster
than RBMs) and produces rather sharp digit images. The figure shows
that it can also stochastically complete missing values: the left half
of the image was initialized to random pixels and the right side was clamped
to an MNIST image. The Markov chain explores plausible variations of 
the completion according to the trained conditional distribution.


\begin{figure}[htpb]
\vs{3}
\centering
\includegraphics[width=0.5\linewidth]{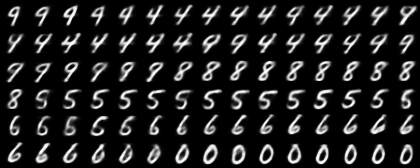} 

\vspace*{1mm}
\includegraphics[width=0.5\linewidth]{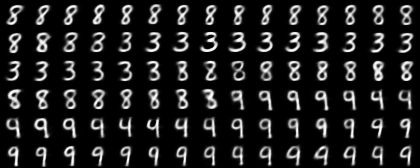}

\vspace*{1.5mm}
\includegraphics[width=0.5\linewidth]{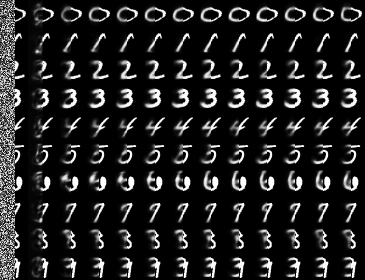}
\vs{5}
\caption{Top: two runs of consecutive samples (one row after the other) generated
from 2-layer GSN model,
showing fast mixing between classes and nice sharp images. Note: only every fourth sample is shown.
Bottom: conditional Markov chain, with the right half of the image clamped to
one of the MNIST digit images and the left half successively resampled, illustrating
the power of the generative model to stochastically fill-in missing inputs.}
\label{fig:samples+inpainting_small}
\vs{5}
\end{figure}

\begin{figure*}[ht]

\centering
\includegraphics[width=0.75\textwidth]{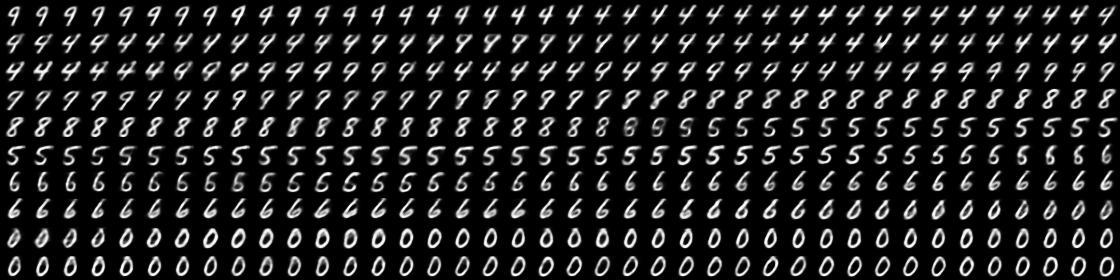} 

\includegraphics[width=0.75\textwidth]{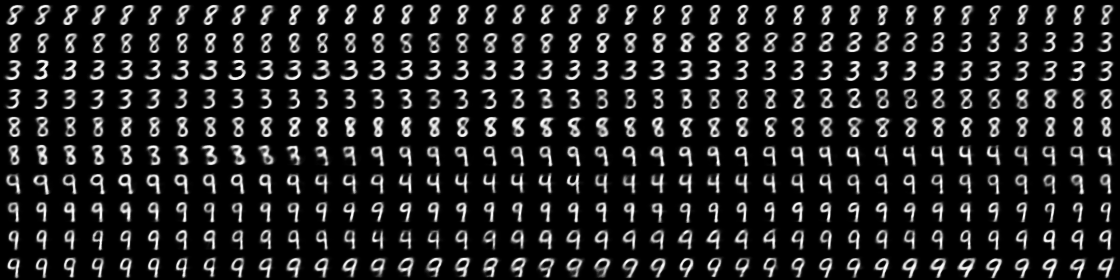}

\vspace*{1.5mm}
\includegraphics[width=.75\textwidth]{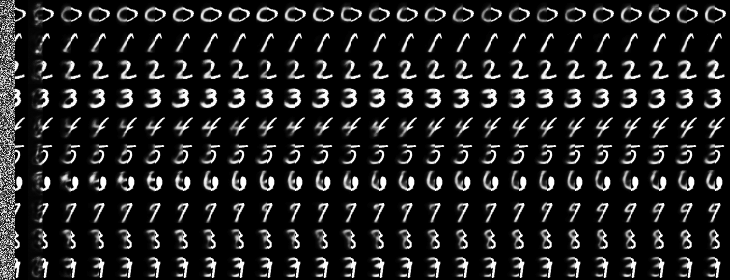}
\caption{These are expanded plots of those in Figure~\ref{fig:samples+inpainting_small}.
{\em Top:} two runs of consecutive samples (one row after the other) generated
from a 2-layer GSN model,
showing that it mixes well between classes and produces nice sharp images. Figure~\ref{fig:samples+inpainting_small} contained only one in every four samples, whereas here every sample is shown.
{\em Bottom:} conditional Markov chain, with the right half of the image clamped to
one of the MNIST digit images and the left half successively resampled, illustrating
the power of the trained generative model to stochastically fill-in missing inputs.
Figure~\ref{fig:samples+inpainting_small} showed only 13 samples in each chain; here we show 26.}
\label{fig:samples+inpainting}
\end{figure*}

\begin{figure*}[ht]
\centering
\includegraphics[width=0.39\textwidth]{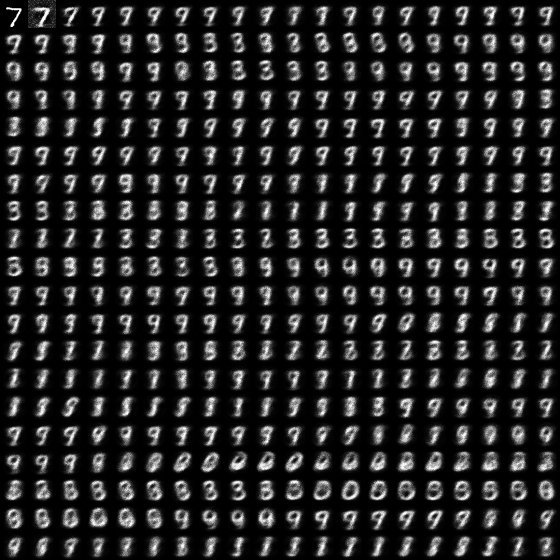}  \vspace*{2mm} %
\includegraphics[width=0.39\textwidth]{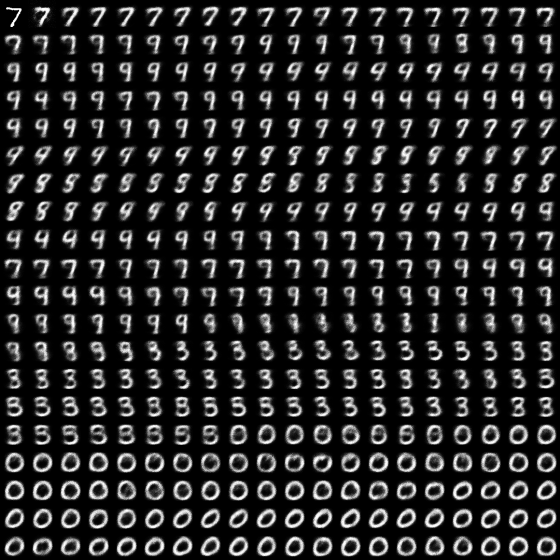}
\caption{Left: consecutive GSN samples obtained after 10 training epochs. Right: GSN
  samples obtained after 25 training epochs. This shows quick convergence to a model that samples well. The samples in
  Figure~\ref{fig:samples+inpainting} are obtained after 600 training
  epochs.}
\label{fig:early-samples}
\end{figure*}

\begin{figure*}[ht]
\centering
\includegraphics[width=1\textwidth]{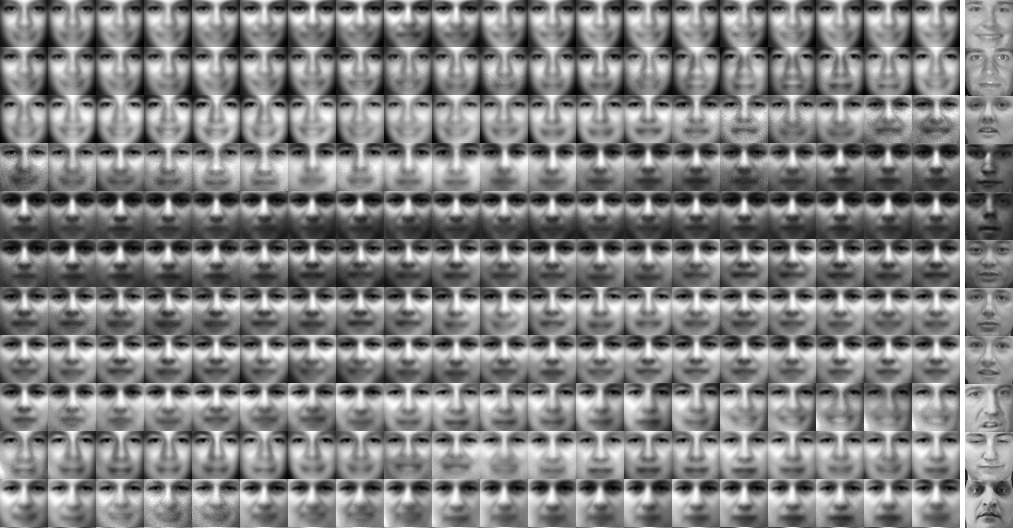}  \vspace*{2mm} %
\caption{Consecutive GSN samples from a model trained on the TFD dataset.  At the end of each row, we show the nearest example from the training set to the last sample on that row to illustrate that the distribution is not merely copying the training set.}
\label{fig:tfd-samples-no-sw}
\end{figure*}




\begin{table}[htpb]
\vs{3}
\caption{Test set log-likelihood lower bound (LL) obtained by a Parzen density estimator
constructed using 10,000 generated samples, for different generative models trained
on MNIST.
The LL is not directly comparable to AIS likelihood estimates
because we use a Gaussian mixture rather
than a Bernoulli mixture to compute the likelihood, but we can compare with
\citet{Rifai-icml2012,Bengio-et-al-ICML2013,Bengio-et-al-NIPS2013} (from which we took the last three columns).
A DBN-2 has 2 hidden layers, a CAE-1 has 1 hidden layer, and a CAE-2 has 2.
The DAE is basically a GSN-1, with no injection of noise inside the network.
\ifLongversion
The last column uses 10,000 MNIST training examples to train the Parzen density estimator.
\fi
}
\label{tab:LL}
\begin{center}
\begin{small}
\begin{sc}
\ifLongversion
\begin{tabular}{lrrrrrr}
\else
\begin{tabular}{lrrrrr} 
\fi
\toprule
\ifLongversion
& GSN-2 & DAE & RBM & DBM-3 & DBN-2 & MNIST\\
Log-likelihood lower bound &  214 & -152 & -244 &  32 & 138 & 24\\
Standard error &  1.1 &  2.2 &   54 &  1.9&  2.0 & 1.6\\
\else
& GSN-2 & DAE & DBN-2 & CAE-1 & CAE-2 \\ 
\midrule
LL  &  214 & 144 & 138 & 68  & 121 \\ 
std.err. &  1.1 & 1.6 &  2.0      & 2.9 & 1.6 \\ 
\fi
\bottomrule
\end{tabular}
\end{sc}
\end{small}
\end{center}
\svs{3}
\end{table}

\subsection{Experimental results for GSNs with the scaling factors for walkbacks}\label{sec:swb_experiment}
We present the experimental results regarding the discussion in Section \ref{sec:swb}. 
Experiments are done on both MNIST and TFD. For TFD, only the unsupervised part of the 
dataset is used, resulting 69,000 samples for train, 15,000 for validation, and 15,000 
for test. The training examples are normalized to have a mean 0 and a standard deviation 1. 

For MNIST the GSNs we used have 2 hidden layers with 1000 tanh units each. 
Salt-and-pepper noise is used to corrupt inputs. We have performed extensive hyperparameter 
search on both the input noise level between 0.3 and 0.7, and the hidden noise level 
between 0.5 and 2.0. The number of walkback steps is also randomly sampled between 2 and 6. 
All the experiments are done with learning the scaling factors, following the 
parameterization in Section \ref{sec:swb_binary}. Following previous experiments, the 
log-probability of the test set is estimated by the same Parzen density estimator on 
consecutive 10,000 samples generated from the trained model. The $\sigma$ parameter in the Parzen 
estimator is cross-validated on the validation set. The sampling is performed 
with $\alpha_1$, the learned scaling factor for the first walkback step. 
The best model achieves a log-likelihood LL=237.44 on 
MNIST test set, which can be compared with the best reported result LL=225 from 
\citet{goodfellow2014generative}.  

On TFD, we follow a similar procedure as in MNIST, but with larger model capacity 
(GSNs with 2000-2000 tanh units) and a 
wider hyperparameter range on the input noise level (between 0.1 and 0.7), 
the hidden noise level (between 0.5 and 5.0), and the number 
of walkback steps (between 2 and 6). For comparison, two types of models are trained, 
one with the scaling factor and one without. The evaluation metric is the same as the one 
used in MNIST experiments. We compute the Parzen density estimation on the first 10,000 
test set examples. The best model without learning the scaling factor results in $LL=1044$, and 
the best model with learning the scaling factor results in 1215 when the scaling factor 
from the first walkback step is used and 1189 when all the scaling factors are used together 
with their corresponding walkback steps. As two further comparisons, using 
the mean over training examples to train the Parzen density estimator results in
$LL=632$, and using the validation set examples to train the Parzen estimator obtains $LL=2029$ (this can be considered as 
an upper bound when the generated samples are almost perfect). 
Figure~\ref{fig:tfd-samples-yes-sw} shows the consecutive 
samples generated with the best model, compared with Figure~\ref{fig:tfd-samples-no-sw}
that is trained without the scaling factor. In addition, Figure~\ref{fig:swb_alpha} shows 
the learned scaling factor for both datasets that confirms the hypothesis on the 
effect of the scaling factors made in
Section~\ref{sec:swb}.
     
\begin{figure*}[ht]
\centering
\includegraphics[width=0.6\textwidth]{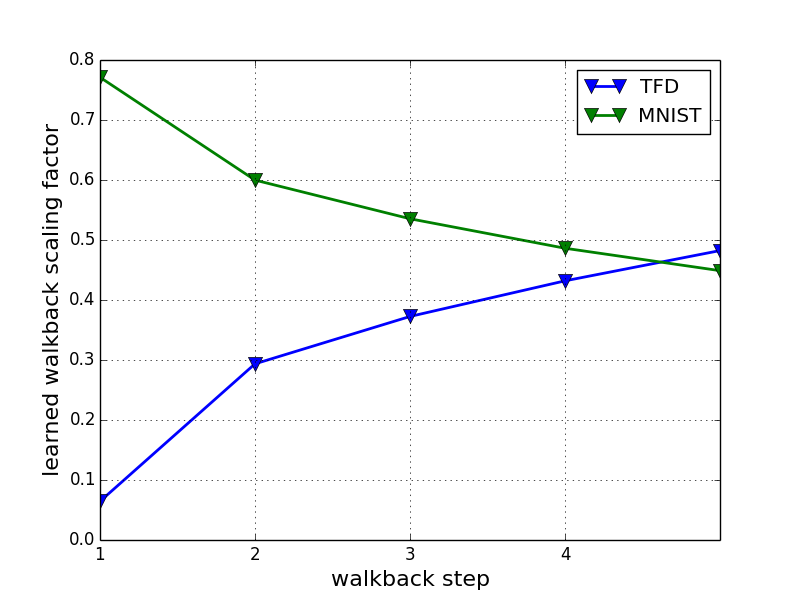}  \vspace*{2mm} %
\caption{Learned $\alpha_k$ values for each walkback step $k$.
Larger values of $\alpha_k$ correspond to \emph{greater} uncertainty for TFD (real-valued) and \emph{less} uncertainty for MNIST (binary), due to the differing methods of parameterization given in Section \ref{sec:swb_binary} and \ref{sec:swb_real}.
Thus, both learned factors reflect the fact that there is greater uncertainty after each consecutive walkback step.
}
\label{fig:swb_alpha}
\end{figure*}

\begin{figure*}[ht]
\centering
\includegraphics[width=0.6\textwidth]{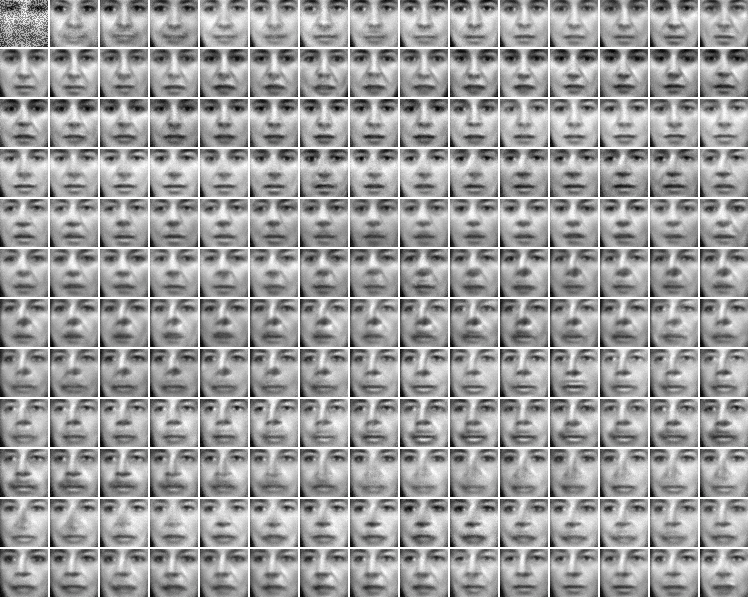}  \vspace*{2mm} %
\caption{Consecutive GSN samples from a model trained on the TFD dataset. The scaling 
factors are learned. The samples are generated by using the scaling factor from the 
first walkback step. Samples are sharper compared with Figure 
(\ref{fig:tfd-samples-no-sw}). This is also reflected by an improvement of 140 in 
Parzen-estimated log-likelihood.}
\label{fig:tfd-samples-yes-sw}
\end{figure*}

\section{Conclusion}
\svs{2}

We have introduced a new approach to training generative models, called
Generative Stochastic Networks (GSN), which includes generative denoising
auto-encoders as a special case (with no latent variable). It is an alternative to 
directly performing maximum likelihood on an explicit $P(X)$, with the objective of avoiding 
the intractable marginalizations and partition function that such direct likelihood methods often entail. The
training procedure is more similar to function approximation than to
unsupervised learning because the reconstruction distribution is simpler
than the data distribution, often unimodal (provably so in the limit of
very small noise).  This makes it possible to train unsupervised models that
capture the data-generating distribution simply using backprop and
gradient descent in a computational graph that includes noise injection.
The proposed theoretical results state that under mild conditions
(in particular that the noise injected in the networks prevents perfect reconstruction),
training a sufficient-capacity model to denoise and reconstruct
its observations (through a powerful family of reconstruction
distributions) suffices to capture the data-generating distribution through
a simple Markov chain. Another view is that we are training the
transition operator of a Markov chain whose stationary distribution
estimates the data distribution, which has the potential of corresponding
to an easier learning problem because the normalization constant for this
conditional distribution is generally dominated by fewer modes. These
theoretical results are extended to the case where the corruption is local
but still allows the chain to mix and to the case where some inputs are
missing or constrained (thus allowing to sample from a conditional
distribution on a subset of the observed variables or to learned structured
output models). The GSN framework is shown to lend to dependency networks a
valid estimator of the joint distribution of the observed variables even
when the learned conditionals are not consistent, also allowing to prove
in a new way the consistency of generalized pseudolikelihood training, associated with the
stationary distribution of a corresponding GSN (that randomly chooses a
subset of variables and then resamples it). Experiments have
been conducted to validate the theory, in the case where the GSN
architecture is a simple denoising auto-encoder and in the case
where the GSN emulates the Gibbs sampling process of a Deep Boltzmann
Machine. A quantitative evaluation of the samples
confirms that the training procedure works very well (in this case
allowing us to train a deep generative model without layerwise pretraining)
and can be used to perform conditional sampling of a subset of
variables given the rest. After early versions of this work
were published~\citep{Bengio+Laufer-arxiv-2013}, the GSN framework has been extended and
applied to classification problems in several different ways~\citep{Goodfellow-et-al-NIPS2013,Zhou+Troyanskaya-ICML2014,Zohrer+Pernkopf-NIPS2014-small}
yielding very interesting results. In addition to providing
a consistent generative interpretation to dependency networks, GSNs
have been used to provide one to Multi-Prediction Deep Boltzmann Machines~\citep{Goodfellow-et-al-NIPS2013}
and to provide a fast sampling algorithm for deep NADE~\citep{Yao-et-al-arXiv2014}.

\subsubsection*{Acknowledgements}

The authors would like to acknowledge the stimulating discussions and
help from Vincent Dumoulin, Aaron Courville, Ian Goodfellow, and Hod Lipson,
as well as funding from NSERC, CIFAR (YB is a CIFAR Senior Fellow), NASA (JY is a Space Technology Research Fellow),
and the Canada Research Chairs and Compute Canada.

{\small
\bibliography{strings,strings-shorter,ml,aigaion-shorter}
}

\section{Appendix: Argument for consistency based on local noise}

\appendix

\label{sec:local-consistency}

This section presents one direction that we pursed initially
to demonstrate that we had certain consistency properties
in terms of recovering the correct stationary distribution
when using a finite training sample.
We discuss this issue when we cite Theorem \ref{thm:schweitzer_inequality}
from the literature in section \ref{sec:from-DAE-to-GSN} and
thought it would be a good idea to
include our previous approach in this Appendix.

\vspace{1em}

The main theorem in \citet{Bengio-et-al-NIPS2013} (stated in supplemental as Theorem S1) requires that the
Markov chain be ergodic. A set of conditions
guaranteeing ergodicity is given in the aforementioned paper, but
these conditions are restrictive in requiring that ${\cal C}(\tilde{X}|X)>0$ everywhere
that ${\calP}(X)>0$. The effect of these restrictions is that
$P_\theta(X|\tilde{X})$ must have the capacity to model every mode of
${\calP}(X)$, exactly the difficulty we were trying to avoid. We show
here how we may also achieve the
required ergodicity through other means, allowing us to choose a
${\cal C}(\tilde{X}|X)$ that only makes small jumps, which in turn
only requires $P_\theta(X|\tilde{X})$ to model a small part of the
space around each $\tilde{X}$.

Let $P_{\theta_n}(X | \tilde{X})$
be a denoising auto-encoder that has been trained on $n$ training examples.
$P_{\theta_n}(X | \tilde{X})$
assigns a probability to $X$, given $\tilde{X}$,
when $\tilde{X} \sim {\cal C}(\tilde{X}|X)$.
This estimator defines a Markov chain $T_n$ obtained by sampling
alternatively an $\tilde{X}$ from ${\cal C}(\tilde{X}|X)$ and
an $X$ from $P_\theta(X | \tilde{X})$. Let $\pi_n$ be the asymptotic
distribution of the chain defined by $T_n$, if it exists.
The following theorem is proven by~\citet{Bengio-et-al-NIPS2013}.
\begin{customthm}{S1}
\label{thm:consistency}
{\bf If}
$P_{\theta_n}(X | \tilde{X})$ is a consistent estimator of the true conditional
distribution ${\calP}(X | \tilde{X})$
{\bf and}
$T_n$ defines an 
ergodic Markov chain,
{\bf then}
as $n\rightarrow \infty$, the asymptotic distribution $\pi_n(X)$ of the generated
samples converges to the data-generating distribution ${\calP}(X)$. 
\end{customthm}

In order for Theorem~\ref{thm:consistency} to apply, the chain must be ergodic. One set of conditions under which this occurs is given in the aforementioned paper. We slightly restate them here:

\begin{figure}[htpb]
\centering
\includegraphics[width=1\linewidth]{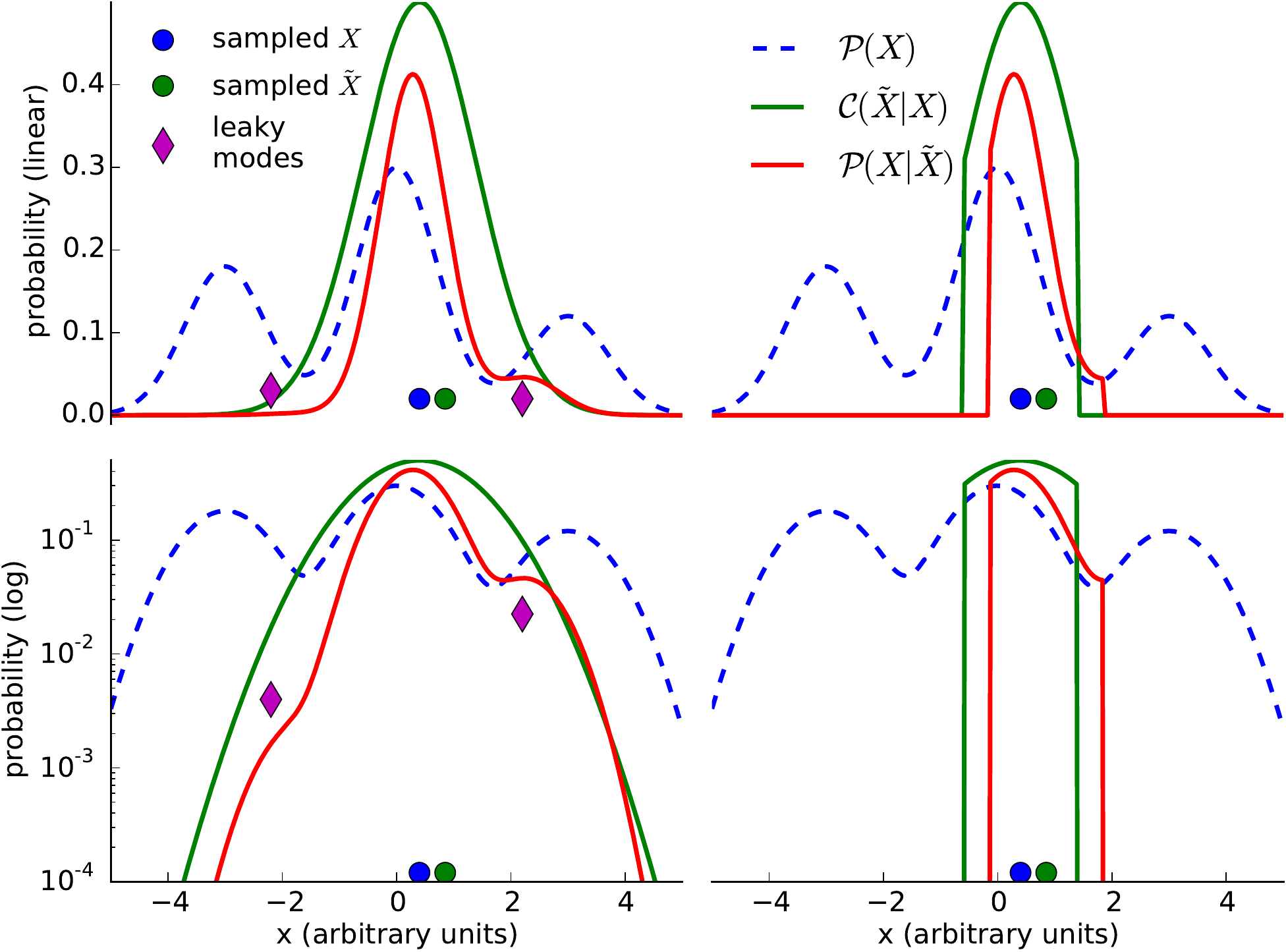}
\caption{If ${\cal C}(\tilde{X}|X)$ is globally supported as required by Corollary~\ref{cor:ergonew} \citep{Bengio-et-al-NIPS2013}, then for $P_{\theta_n}(X|\tilde{X})$ to converge to ${\calP}(X|\tilde{X})$, it will eventually have to model all of the modes in ${\calP}(X)$, even though the modes are damped (see ``leaky modes'' on the left). However, if we guarantee ergodicity through other means, as in Corollary~\ref{cor:ergonew2}, we can choose a local ${\cal C}(\tilde{X}|X)$ and allow $P_{\theta_n}(X|\tilde{X})$ to model only the local structure of ${\calP}(X)$ (see right).}
\label{fig:modes_1d}
\end{figure}

\begin{corollary}
\label{cor:ergonew}
{\bf If} the support for both the data-generating distribution and
denoising model are contained in and non-zero in
a finite-volume region $V$ (i.e., $\forall \tilde{X}$, $\forall X\notin V,\; {\calP}(X)=0, P_\theta(X|\tilde{X})=0$ and $\forall \tilde{X}$, $\forall X\in V,\; {\calP}(X)>0, P_\theta(X|\tilde{X})>0,  {\cal C}(\tilde{X}|X)>0$)
{\bf and} these statements remain
true in the limit of $n\rightarrow \infty$, {\bf then}
the chain defined by $T_n$ will be ergodic.
\end{corollary}

If conditions in Corollary~\ref{cor:ergonew} apply, then the chain will be ergodic and Theorem~\ref{thm:consistency} will apply. However, these conditions are sufficient, not necessary, and in many cases they may be artificially restrictive. In particular, Corollary~\ref{cor:ergonew} 
defines a large region $V$ containing any possible $X$ allowed by the model and requires that we maintain the probability of jumping between any two points in a single move to be greater than 0.
While this generous condition helps us easily guarantee the ergodicity of the chain, it also has the unfortunate side effect of requiring that, in order for $P_{\theta_n}(X|\tilde{X})$ to converge to the conditional distribution ${\calP}(X|\tilde{X})$, it must have the capacity to model every mode of ${\calP}(X)$, exactly the difficulty we were trying to avoid. The left two plots in Figure~\ref{fig:modes_1d} show this difficulty: because ${\cal C}(\tilde{X}|X)>0$ everywhere in $V$, every mode of $P(X)$ will leak, perhaps attenuated, into $P(X|\tilde{X})$.

Fortunately, we may seek ergodicity through other means. The following
corollary allows 
us to choose a ${\cal C}(\tilde{X}|X)$ that only makes small jumps, which in turn only requires $P_\theta(X|\tilde{X})$ to model a small part of the space $V$ around each $\tilde{X}$.

Let $P_{\theta_n}(X | \tilde{X})$
be a denoising auto-encoder that has been trained on $n$ training examples
and ${\cal C}(\tilde{X}|X)$ be some corruption distribution.
$P_{\theta_n}(X | \tilde{X})$
assigns a probability to $X$, given $\tilde{X}$,
when $\tilde{X} \sim {\cal C}(\tilde{X}|X)$ and $X \sim {\cal P}(X)$.
Define a Markov chain $T_n$ by alternately sampling
an $\tilde{X}$ from ${\cal C}(\tilde{X}|X)$ and
an $X$ from $P_\theta(X | \tilde{X})$.

\begin{corollary}
\label{cor:ergonew2}
{\bf If} the data-generating distribution is contained in and non-zero in
a finite-volume region $V$ (i.e., $\forall X\notin V,\; {\calP}(X)=0,$ and $\forall X\in V,\; {\calP}(X)>0$)
{\bf and} all pairs of points in $V$ can be connected by a finite-length path through $V$
{\bf and}
for some $\epsilon > 0$,
$\forall \tilde{X} \in V,\forall X\in V$ within $\epsilon$ of each other,
${\cal C}(\tilde{X}|X) > 0$ and $P_{\theta}(X|\tilde{X}) > 0$
{\bf and} these statements remain
true in the limit of $n\rightarrow \infty$,
{\bf then}
the chain defined by $T_n$ will be ergodic.
\end{corollary}

\begin{proof}
Consider any two points $X_a$ and $X_b$ in $V$.
By the assumptions of Corollary~\ref{cor:ergonew2}, there exists a finite length path between $X_a$ and $X_b$ through $V$. Pick one such finite length path $P$.
Chose a finite series of points $x = \{x_1, x_2, \ldots, x_k\}$ along $P$, with $x_1 = X_a$ and $x_k = X_b$ such that the distance between every pair of consecutive points $(x_i, x_{i+1})$ is less than $\epsilon$ as defined in Corollary~\ref{cor:ergonew2}.
Then the probability of sampling $\tilde{X} = x_{i+1}$ from ${\cal C}(\tilde{X}|x_i))$ will be positive, because ${\cal C}(\tilde{X}|X)) > 0$ for all $\tilde{X}$ within $\epsilon$ of $X$ by the assumptions of Corollary~\ref{cor:ergonew2}. Further, the probability of sampling $X = \tilde{X} = x_{i+1}$ from $P_{\theta}(X|\tilde{X})$ will be positive from the same assumption on $P$.
Thus the probability of jumping along the path from $x_i$ to $x_{i+1}$, $T_n(X_{t+1} = x_{i+1}|X_{t} = x_i)$, will be greater than zero for all jumps on the path.
Because there is a positive probability finite length path between all pairs of points in $V$, all states commute, and the chain is irreducible.
If we consider $X_a = X_b \in V$, by the same arguments $T_n(X_t = X_a|X_{t-1} = X_a) > 0$. Because there is a positive probability of remaining in the same state, the chain will be aperiodic.
Because the chain is irreducible and over a finite state space, it will be positive recurrent as well. Thus, the chain defined by $T_n$ is ergodic.
\end{proof}

Although this is a weaker condition that has the advantage of making
the denoising distribution even easier to model (probably having less
modes), we must be careful to choose the ball size $\epsilon$ large
enough to guarantee that one can jump often enough between the major modes of ${\calP}(X)$
when these are separated by zones of tiny probability. $\epsilon$ must be larger than half the 
largest distance one would have to travel across a desert of low probability
separating two nearby modes (which if not connected in this way would make
$V$ not anymore have a single connected component). Practically, there is
a trade-off between the difficulty of estimating ${\calP}(X|\tilde{X})$
and the ease of mixing between major modes separated by a very low density zone.

\end{document}


\twocolumn[
\icmltitle{Supplemental Material for: Deep Generative Stochastic Networks Trainable by Backprop}


\icmlauthor{Yoshua Bengio$^*$}{find.us@on.the.web}
\icmlauthor{\'Eric Thibodeau-Laufer}{}
\icmlauthor{Guillaume Alain}{}
\icmladdress{D\'epartement d'informatique et recherche op\'erationnelle, Universit\'e de Montr\'eal,$^*$\& Canadian Inst. for Advanced Research}
\icmlauthor{Jason Yosinski}{}
\icmladdress{Department of Computer Science, Cornell University}

\icmlkeywords{deep learning, unsupervised learning, generative models, denoising autoencoder}

\vskip 0.3in
]


\appendix

\label{sec:local-consistency}

This section presents one direction that we pursed initially
to demonstrate that we had certain consistency properties
in terms of recovering the correct stationary distribution
when using a finite training sample.
We discuss this issue when we cite Theorem \ref{thm:schweitzer_inequality}
from the literature in section \ref{sec:from-DAE-to-GSN} and
thought it would be a good idea to
include our previous approach in this Appendix.

\vspace{1em}

The main theorem in \citet{Bengio-et-al-NIPS2013} (stated in supplemental as Theorem S1) requires that the
Markov chain be ergodic. A set of conditions
guaranteeing ergodicity is given in the aforementioned paper, but
these conditions are restrictive in requiring that ${\cal C}(\tilde{X}|X)>0$ everywhere
that ${\calP}(X)>0$. The effect of these restrictions is that
$P_\theta(X|\tilde{X})$ must have the capacity to model every mode of
${\calP}(X)$, exactly the difficulty we were trying to avoid. We show
here how we may also achieve the
required ergodicity through other means, allowing us to choose a
${\cal C}(\tilde{X}|X)$ that only makes small jumps, which in turn
only requires $P_\theta(X|\tilde{X})$ to model a small part of the
space around each $\tilde{X}$.

Let $P_{\theta_n}(X | \tilde{X})$
be a denoising auto-encoder that has been trained on $n$ training examples.
$P_{\theta_n}(X | \tilde{X})$
assigns a probability to $X$, given $\tilde{X}$,
when $\tilde{X} \sim {\cal C}(\tilde{X}|X)$.
This estimator defines a Markov chain $T_n$ obtained by sampling
alternatively an $\tilde{X}$ from ${\cal C}(\tilde{X}|X)$ and
an $X$ from $P_\theta(X | \tilde{X})$. Let $\pi_n$ be the asymptotic
distribution of the chain defined by $T_n$, if it exists.
The following theorem is proven by~\citet{Bengio-et-al-NIPS2013}.
\begin{customthm}{S1}
\label{thm:consistency}
{\bf If}
$P_{\theta_n}(X | \tilde{X})$ is a consistent estimator of the true conditional
distribution ${\calP}(X | \tilde{X})$
{\bf and}
$T_n$ defines an 
ergodic Markov chain,
{\bf then}
as $n\rightarrow \infty$, the asymptotic distribution $\pi_n(X)$ of the generated
samples converges to the data-generating distribution ${\calP}(X)$. 
\end{customthm}

In order for Theorem~\ref{thm:consistency} to apply, the chain must be ergodic. One set of conditions under which this occurs is given in the aforementioned paper. We slightly restate them here:

\begin{figure}[htpb]
\centering
\includegraphics[width=1\linewidth]{crop_modes_1d.pdf}
\caption{If ${\cal C}(\tilde{X}|X)$ is globally supported as required by Corollary~\ref{cor:ergonew} \citep{Bengio-et-al-NIPS2013}, then for $P_{\theta_n}(X|\tilde{X})$ to converge to ${\calP}(X|\tilde{X})$, it will eventually have to model all of the modes in ${\calP}(X)$, even though the modes are damped (see ``leaky modes'' on the left). However, if we guarantee ergodicity through other means, as in Corollary~\ref{cor:ergonew2}, we can choose a local ${\cal C}(\tilde{X}|X)$ and allow $P_{\theta_n}(X|\tilde{X})$ to model only the local structure of ${\calP}(X)$ (see right).}
\label{fig:modes_1d}
\end{figure}

\begin{corollary}
\label{cor:ergonew}
{\bf If} the support for both the data-generating distribution and
denoising model are contained in and non-zero in
a finite-volume region $V$ (i.e., $\forall \tilde{X}$, $\forall X\notin V,\; {\calP}(X)=0, P_\theta(X|\tilde{X})=0$ and $\forall \tilde{X}$, $\forall X\in V,\; {\calP}(X)>0, P_\theta(X|\tilde{X})>0,  {\cal C}(\tilde{X}|X)>0$)
{\bf and} these statements remain
true in the limit of $n\rightarrow \infty$, {\bf then}
the chain defined by $T_n$ will be ergodic.
\end{corollary}

If conditions in Corollary~\ref{cor:ergonew} apply, then the chain will be ergodic and Theorem~\ref{thm:consistency} will apply. However, these conditions are sufficient, not necessary, and in many cases they may be artificially restrictive. In particular, Corollary~\ref{cor:ergonew} 
defines a large region $V$ containing any possible $X$ allowed by the model and requires that we maintain the probability of jumping between any two points in a single move to be greater than 0.
While this generous condition helps us easily guarantee the ergodicity of the chain, it also has the unfortunate side effect of requiring that, in order for $P_{\theta_n}(X|\tilde{X})$ to converge to the conditional distribution ${\calP}(X|\tilde{X})$, it must have the capacity to model every mode of ${\calP}(X)$, exactly the difficulty we were trying to avoid. The left two plots in Figure~\ref{fig:modes_1d} show this difficulty: because ${\cal C}(\tilde{X}|X)>0$ everywhere in $V$, every mode of $P(X)$ will leak, perhaps attenuated, into $P(X|\tilde{X})$.

Fortunately, we may seek ergodicity through other means. The following
corollary allows 
us to choose a ${\cal C}(\tilde{X}|X)$ that only makes small jumps, which in turn only requires $P_\theta(X|\tilde{X})$ to model a small part of the space $V$ around each $\tilde{X}$.

Let $P_{\theta_n}(X | \tilde{X})$
be a denoising auto-encoder that has been trained on $n$ training examples
and ${\cal C}(\tilde{X}|X)$ be some corruption distribution.
$P_{\theta_n}(X | \tilde{X})$
assigns a probability to $X$, given $\tilde{X}$,
when $\tilde{X} \sim {\cal C}(\tilde{X}|X)$ and $X \sim {\cal P}(X)$.
Define a Markov chain $T_n$ by alternately sampling
an $\tilde{X}$ from ${\cal C}(\tilde{X}|X)$ and
an $X$ from $P_\theta(X | \tilde{X})$.

\begin{corollary}
\label{cor:ergonew2}
{\bf If} the data-generating distribution is contained in and non-zero in
a finite-volume region $V$ (i.e., $\forall X\notin V,\; {\calP}(X)=0,$ and $\forall X\in V,\; {\calP}(X)>0$)
%
{\bf and} all pairs of points in $V$ can be connected by a finite-length path through $V$
%
{\bf and}
for some $\epsilon > 0$,
$\forall \tilde{X} \in V,\forall X\in V$ within $\epsilon$ of each other,
${\cal C}(\tilde{X}|X) > 0$ and $P_{\theta}(X|\tilde{X}) > 0$
%
{\bf and} these statements remain
true in the limit of $n\rightarrow \infty$,
%
{\bf then}
the chain defined by $T_n$ will be ergodic.
\end{corollary}

\begin{proof}
Consider any two points $X_a$ and $X_b$ in $V$.
By the assumptions of Corollary~\ref{cor:ergonew2}, there exists a finite length path between $X_a$ and $X_b$ through $V$. Pick one such finite length path $P$.
Chose a finite series of points $x = \{x_1, x_2, \ldots, x_k\}$ along $P$, with $x_1 = X_a$ and $x_k = X_b$ such that the distance between every pair of consecutive points $(x_i, x_{i+1})$ is less than $\epsilon$ as defined in Corollary~\ref{cor:ergonew2}.
%
Then the probability of sampling $\tilde{X} = x_{i+1}$ from ${\cal C}(\tilde{X}|x_i))$ will be positive, because ${\cal C}(\tilde{X}|X)) > 0$ for all $\tilde{X}$ within $\epsilon$ of $X$ by the assumptions of Corollary~\ref{cor:ergonew2}. Further, the probability of sampling $X = \tilde{X} = x_{i+1}$ from $P_{\theta}(X|\tilde{X})$ will be positive from the same assumption on $P$.
%
Thus the probability of jumping along the path from $x_i$ to $x_{i+1}$, $T_n(X_{t+1} = x_{i+1}|X_{t} = x_i)$, will be greater than zero for all jumps on the path.
Because there is a positive probability finite length path between all pairs of points in $V$, all states commute, and the chain is irreducible.
%
If we consider $X_a = X_b \in V$, by the same arguments $T_n(X_t = X_a|X_{t-1} = X_a) > 0$. Because there is a positive probability of remaining in the same state, the chain will be aperiodic.
%
Because the chain is irreducible and over a finite state space, it will be positive recurrent as well. Thus, the chain defined by $T_n$ is ergodic.
\end{proof}

Although this is a weaker condition that has the advantage of making
the denoising distribution even easier to model (probably having less
modes), we must be careful to choose the ball size $\epsilon$ large
enough to guarantee that one can jump often enough between the major modes of ${\calP}(X)$
when these are separated by zones of tiny probability. $\epsilon$ must be larger than half the 
largest distance one would have to travel across a desert of low probability
separating two nearby modes (which if not connected in this way would make
$V$ not anymore have a single connected component). Practically, there is
a trade-off between the difficulty of estimating ${\calP}(X|\tilde{X})$
and the ease of mixing between major modes separated by a very low density zone.


{\small
\bibliography{strings,strings-shorter,ml,aigaion-shorter,cultrefs}
\bibliographystyle{icml2014}
}